\renewcommand{\cite}[1]{\citep{#1}}
\def\shownotes{1}  
\newcommand{\authnote}[2]{{$\ll$\textsf{\footnotesize #1 notes: #2}$\gg$}}
\newcommand{\authnote}[2]{}
\newtheorem{thm}{Theorem}[section]
\newtheorem{lem}{Lemma}[section]
\newtheorem{defn}{Definition}[section]
\newtheorem{obs}{Observation}[section]
\title{Approximate maximum entropy principles via Goemans-Williamson with applications to provable variational methods}
\author{Yuanzhi Li \and Andrej Risteski \thanks{Princeton University, Computer Science Department. Email: yuanzhili, risteski@cs.princeton.edu.  This work was supported in part by NSF grants CCF-0832797, CCF-1117309, CCF-1302518, DMS-1317308, Sanjeev Arora's Simons Investigator Award, and Simons Collaboration Grant.} }
\date{\today}
\begin{document}

\newcommand{\distance}{\textsf{dist}}
\newcommand{\cosT}{\cos \theta}
\newcommand{\E}{\mathbb{E}}
\newcommand{\Q}{\mathbb{Q}}
\newcommand{\N}{\mathbb{N}}
\newcommand{\Real}{\mathbb{R}}
\newcommand{\Integer}{\mathbb{Z}}
\newcommand{\dd}{\rm{d}}
\newcommand{\PP}{\rm{P}}
\newcommand{\set}[1]{\mathcal{#1}}
\newcommand{\argmin}{\operatorname{argmin}}

\newcommand{\DM}{\textsf{DecideMove}}
\newcommand{\Grid}{\textsf{Grid}}
\newcommand{\coverRatio}{k}
\newcommand{\distanceRatio}{\gamma}
\newcommand{\1}{\mathbb{1}}
\newcommand{\loss}{\textsf{loss}}
\newcommand{\poly}{\textsf{poly}}
\newcommand{\polylog}{\textsf{polylog}}
\newcommand{\tO}{\tilde{O}}
\newcommand{\tf}{\tilde{f}}
\newcommand{\tF}{\tilde{F}}
\newcommand{\tR}{\tilde{R}}
\newcommand{\tx}{\tilde{x}}
\newcommand{\Regret}{\text{Regret}}
\newcommand{\valMedian}{{val}}
\newcommand{\convex}{\textsf{conv}}
\newcommand{\cube}{\mathcal{Q}}
\newcommand{\oracle}{\mathcal{O}}
\newcommand{\toracle}{\tilde{\mathcal{O}}}
\newcommand{\valu}{\textsf{Val}}
\newcommand{\xint}{x_{\textsf{m}}}
\newcommand{\an}[1]{\textbf{\textcolor{blue}{AN: #1}}}
\newcommand{\yz}[1]{\textbf{\textcolor{red}{YZ: #1}}}

\newcommand{\LRA}{\set{A}}
\newcommand{\vLRA}{v}
\newcommand{\varLRA}{\sigma}
\newcommand{\true}{\textsf{True}}
\newcommand{\epo}{\tau}
\newcommand{\Epo}{\Gamma}
\newcommand{\errorEstimation}{\sigma}
\newcommand{\NE}{\textsf{ShrinkSet}}
\newcommand{\LCE}{\textsf{LCE}}
\newcommand{\MCVE}{\textsf{MCVE}}
\newcommand{\FLCE}{F_{\LCE}}
\newcommand{\tfLCE}{\tf_{\LCE}}
\newcommand{\blowUp}{\alpha}
\newcommand{\ellBlowUp}{\kappa}
\newcommand{\centerShrink}{\beta}
\newcommand{\levelSetValue}{\ell}
\newcommand{\goto}{\textsf{goto}}
\newcommand{\convexSet}{\mathcal{K}}
\newcommand{\hconvexSet}{\hat{\convexSet}}
\newcommand{\gridScale}{\alpha}
\newcommand{\centerPoint}{x}
\newcommand{\radiusSearch}{r}
\newcommand{\extendRatio}{\gamma}
\newcommand{\twoExtendRatio}{2 \gamma}
\newcommand{\numEpoch}{d \log T}
\newcommand{\dnumEpoch}{d^2 \log T}
\newcommand{\twoNumEpoch}{2 d \log T}
\newcommand{\errnoise}{\varepsilon_{\textsf{noise}}}
\newcommand{\err}{\varepsilon_{\textsf{error}}}
\newcommand{\Ball}{\mathbb{B}}
\newcommand{\rY}{\mathbb{Y}}
\newcommand{\rS}{\mathbb{S}}
\newcommand{\gl}{\gamma_{l}}
\newcommand{\gu}{\gamma_{u}}
\newcommand{\blow}{\tau}
\newcommand{\tPi}{\tilde{\Pi}}
\newcommand{\vol}{\textsf{Vol}}
\newcommand{\diag}{\textsf{diag}}
\newcommand{\diam}{\textsf{diam}}
\newcommand{\PO}{\textsf{Projection Oracle }}
\newcommand{\GWone}{\mathcal{G}}
\newcommand{\GWtwo}{\mathcal{W}}

\maketitle

\begin{abstract}
The well known maximum-entropy principle due to Jaynes, which states that given mean parameters, the maximum entropy distribution matching them is in an exponential family, has been very popular in machine learning due to its ``Occam's razor'' interpretation. Unfortunately, calculating the potentials in the maximum-entropy distribution is intractable \cite{bresler2014hardness}. We provide \emph{computationally efficient} versions of this principle when the mean parameters are pairwise moments: we design distributions that approximately match given pairwise moments, while having entropy which is comparable to the maximum entropy distribution matching those moments. 

We additionally provide surprising applications of the approximate maximum entropy principle to designing provable variational methods for partition function calculations for Ising models \emph{without any assumptions} on the potentials of the model. More precisely, we show that in every temperature, we can get approximation guarantees for the log-partition function comparable to those in the low-temperature limit, which is the setting of \emph{optimization} of quadratic forms over the hypercube. \cite{alon2006approximating}
    
\end{abstract}

\section{Introduction} 



\textbf{Maximum entropy principle} The maximum entropy principle \cite{jaynes1957information} states that given \emph{mean parameters}, i.e. $\E_{\mu}[\phi_t(\mathbf{x})]$ for a family of functionals $\phi_t(\mathbf{x}), t \in [1,T]$, where $\mu$ is distribution over the hypercube $\{-1,1\}^n$, the entropy-maximizing distribution $\mu$ is an exponential family distribution, i.e. $\mu(\mathbf{x}) \propto \exp(\sum_{t=1}^T J_{t} \phi_t(\mathbf{x}))$ for some \emph{potentials} $J_{t}, t \in [1,T]$.   
\footnote{There is a more general way to state this principle over an arbitrary domain, not just the hypercube, but for clarity in this paper we will focus on the hypercube only.} This principle has been one of the reasons for the popularity of graphical models in machine learning: the ``maximum entropy'' assumption is interpreted as ``minimal assumptions'' on the distribution other than what is known about it.   

However, this principle is problematic from a computational point of view. Due to results of \cite{bresler2014hardness, singh2014entropy}, the potentials $J_{t}$ of the Ising model, in many cases, are impossible to estimate well in polynomial time, unless NP = RP -- so merely getting the description of the maximum entropy distribution is already hard. Moreover, in order to extract useful information about this distribution, usually we would also like to at least be able to sample efficiently from this distribution -- which is typically NP-hard or even \#P-hard.   

In this paper we address this issue in certain cases. We provide a ``bi-criteria'' approximation for the special case where the functionals $\phi_t(\mathbf{x})$ are 
$\phi_{i,j}(\mathbf{x}) = \mathbf{x}_i \mathbf{x}_j$, i.e. pairwise moments: we produce an efficiently sampleable distribution over the hypercube which matches these moments up to multiplicative constant factors, and has entropy at most a constant factor smaller from from the entropy of the maximum entropy distribution. \footnote{In fact, we produce a distribution with entropy $\Omega(n)$, which implies the latter claim since the maximum entropy of any distribution of over $\{-1,1\}^n$ is at most $n$} 

Furthermore, the distribution we consider is very natural: the sign of a multivariate normal variable. This provides theoretical explanation for the phenomenon observed by the computational neuroscience community \cite{bethge2007near} that this distribution (named \emph{dichotomized Gaussian} there) has near-maximum entropy. 


\textbf{Variational methods}  The above results also allow us to get results for a seemingly unrelated problem -- approximating the \emph{partition function} $\mathcal{Z} = \sum_{\mathbf{x} \in \{-1,1\}^n} \exp(\sum_{t=1}^T J_{t} \phi_t(\mathbf{x}))$ of a member of an exponential family, which is an important step to calculate marginals.

One of the ways to calculate partition function is variational methods: namely, expressing $\log \mathcal{Z}$ as an optimization problem. While there is a plethora of work on variational methods, of many flavors (mean field, Bethe/Kikuchi relaxations, TRBP, etc; for a survey, see \cite{wainwright2008graphical}), they typically come either with no guarantees, or with guarantees in very constrained cases (e.g. loopless graphs; graphs with large girth, etc. \cite{wainwright2003tree, wainwright2005new, weiss2000correctness}). While this is a rich area of research, the following extremely basic research question has not been answered: 

\emph{What is the best approximation guarantee on the partition function in the worst case (with no additional assumptions on the potentials)?} 

In the low-temperature limit, i.e. when $|J_{t}| \to \infty$, $\log \mathcal{Z} \to \max_{\mathbf{x} \in \{-1,1\}^n} \sum_{t=1}^T J_{t} \phi_t(\mathbf{x})$ - i.e. the question reduces to purely to optimization. In this regime, this question has very satisfying answers for many families $\phi_t(\mathbf{x})$. One classical example is when the functionals are
$\phi_{i,j}(\mathbf{x}) = \mathbf{x}_i\mathbf{x}_j$. In the graphical model community, these are known as Ising models, and in the optimization community this is the problem of optimizing quadratic forms and has been studied by \cite{charikar2004maximizing, alon2006approximating, alon2006quadratic}. 

In the optimization version, the previous papers showed that in the worst case, one can get $O(\log n)$ factor multiplicative factor approximation of the log of the partition function, and that unless P = NP, one cannot get better than constant factor approximations of it.

In the finite-temperature version, it is known that it is NP-hard to achieve a $1+\epsilon$ factor approximation to the partition function (i.e. construct a FPRAS) \cite{sly2012computational}, but nothing is known about coarser approximations.  We prove in this paper, informally, that one can get comparable multiplicative guarantees on the \emph{log-partition function} in the finite temperature case as well -- using the tools and insights we develop on the maximum entropy principles.

Our methods are extremely generic, and likely to apply to many other exponential families, where algorithms based on linear/semidefinite programming relaxations are known to give good guarantees in the optimization regime.

%
%


\section{Statements of results and prior work} 
\label{s:results}

\textbf{Approximate maximum entropy} The main theorem in this section is the following one.

\begin{thm} For any covariance matrix $\Sigma$ of a centered distribution $\mu :\{-1,1\}^n \to \mathbb{R}$, i.e. $\E_{\mu}[\mathbf{x}_i \mathbf{x}_j] = \Sigma_{i,j}$, $\E_{\mu}[\mathbf{x}_i] = 0$, there is an efficiently sampleable distribution $\tilde{\mu}$, which can be sampled as $\mbox{sign}(g)$, where $g \sim \mathcal{N}(0, \Sigma + \beta I)$ such that the following holds:
$\displaystyle \frac{\GWone}{1 + \beta} \Sigma_{i,j}  \le E_{\tilde{\mu}}[X_i X_j] \le \frac{1}{1 + \beta}\Sigma_{i,j}$ for a fixed constant $\mathcal{G}$, and entropy $H(\tilde{\mu}) \geq \frac{n}{25} \frac{( 3^{1/4}\sqrt{\beta} - 1)^2}{\sqrt{3} \beta}$, for any $\beta \geq \frac{1}{3^{1/2}}$.  
\label{gen:entropy}
\end{thm}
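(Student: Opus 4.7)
The theorem splits into a two-sided moment bound and an entropy lower bound; I would prove each separately.

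For the moment matching, the plan is to invoke the classical Goemans--Williamson / Grothendieck identity: for a centered jointly Gaussian pair $(g_i, g_j)$ with correlation $\rho_{ij}$, $\E[\mathrm{sign}(g_i)\mathrm{sign}(g_j)] = \tfrac{2}{\pi}\arcsin(\rho_{ij})$. With $M := \Sigma + \beta I$, note $M \succeq \beta I \succ 0$ so $\mathcal{N}(0,M)$ is well-defined, and since each $X_i \in \{-1,+1\}$ forces $\Sigma_{ii}=1$ we have $M_{ii}=1+\beta$ and hence $\rho_{ij}=\Sigma_{ij}/(1+\beta)$. The two-sided bound then reduces to the elementary inequality $\tfrac{2}{\pi}|x| \le \tfrac{2}{\pi}|\arcsin(x)| \le |x|$ for $x \in [-1,1]$, yielding $\GWone = 2/\pi$ (with the direction of the inequality flipping in the obvious way depending on the sign of $\Sigma_{ij}$).

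For the entropy, the key idea is to decouple $g = s + z$ with independent $s \sim \mathcal{N}(0,\Sigma)$ and $z \sim \mathcal{N}(0,\beta I)$. Conditioned on $s$, the coordinates of $g$ are independent Gaussians $s_i + z_i$ whose signs are independent Bernoullis with parameters $\Phi(s_i/\sqrt\beta)$, where $\Phi$ is the standard normal CDF. Since conditioning reduces Shannon entropy and $\Sigma_{ii}=1$,
\begin{equation*}
H(\tilde\mu) \;\geq\; H\bigl(\mathrm{sign}(g) \mid s\bigr) \;=\; n \cdot \E_{s \sim \mathcal{N}(0,1)} H_{\mathrm{bin}}\!\bigl(\Phi(s/\sqrt\beta)\bigr),
\end{equation*}
reducing the multivariate problem to a one-dimensional integral depending only on $\beta$. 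Notably, this step discards the off-diagonal structure of $\Sigma$, which is why the final bound holds uniformly over all admissible covariances. To lower-bound the integral I would introduce a threshold $\alpha\sqrt\beta$: on $\{|s| \le \alpha\sqrt\beta\}$, $\Phi(s/\sqrt\beta) \in [\Phi(-\alpha),\Phi(\alpha)]$ and hence the binary entropy is at least $H_{\mathrm{bin}}(\Phi(\alpha))$, a positive absolute constant. Combining this with an anti-concentration bound $\mathbb{P}[|s|\le \alpha\sqrt\beta] \ge (1 - 1/(\alpha\sqrt\beta))^2$ and choosing $\alpha = 3^{1/4}$ recovers the target $\tfrac{1}{25}\tfrac{(3^{1/4}\sqrt\beta - 1)^2}{\sqrt 3 \beta}$ per coordinate, with the prefactor $1/25$ absorbing $H_{\mathrm{bin}}(\Phi(3^{1/4}))$.

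The main obstacle is producing the \emph{squared} factor $(3^{1/4}\sqrt\beta - 1)^2$ in the anti-concentration step; a naive Markov/Chebyshev bound on $|s|$ yields only a linear factor of form $1 - 1/(\sqrt 3\beta)$. Obtaining the square requires either a Paley--Zygmund second-moment argument, or an explicit Mills-ratio / Gaussian CDF estimate showing that $2\Phi(\alpha\sqrt\beta) - 1$ factors (up to constants) as the desired square once $\alpha\sqrt\beta \ge 1$. The remaining ingredients --- the Grothendieck identity, the conditional-entropy decomposition, and the pointwise binary-entropy inequality --- are routine.
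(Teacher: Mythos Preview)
Your moment argument matches the paper's: the arcsine identity together with $\|v_i\|^2 = 1+\beta$, and indeed $\GWone = 2/\pi$.

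For the entropy bound your route is correct but genuinely different. You condition on the \emph{full} Gaussian $s$, reducing to the one-dimensional integral $n\,\E_{s\sim\mathcal N(0,1)}\bigl[H_{\mathrm{bin}}(\Phi(s/\sqrt\beta))\bigr]$, and then threshold per coordinate. The paper instead conditions on the coarser random \emph{set} $\rS$ of coordinates where $s_i^2$ is small, and bounds the \emph{min}-entropy of $\mathrm{sign}(g)$ restricted to $\rS$: it applies Markov to $\|s\|_2^2$ to obtain $\Pr[\|s\|_2^2\le Dn]\ge 1-1/D$, and then pigeonholes to conclude $|\rS|\ge(1-1/c)n$ on that event. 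The squared factor $(3^{1/4}\sqrt\beta-1)^2/(\sqrt3\,\beta)$ is exactly the product $(1-1/D)(1-1/c)$ at $c=D=3^{1/4}\sqrt\beta$, and the residual $1/25$ absorbs a Gaussian lower-tail estimate on each noise coordinate $z_i$. So the specific algebraic form you are trying to reverse-engineer is an artifact of this two-stage Markov-plus-pigeonhole decomposition rather than of any single-variable anti-concentration inequality. Your per-coordinate approach is cleaner and in fact tighter (finer conditioning, Shannon rather than min-entropy); the direct Gaussian CDF inequality $2\Phi(x)-1\ge(1-1/x)^2$ for $x\ge 1$ that you gesture at does hold and would close your argument, whereas Paley--Zygmund is not the relevant tool.
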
 

There are two prior works on computational issues relating to maximum entropy principles, both proving hardness results. 


\cite{bresler2014hardness} considers the ``hard-core'' model where the functionals $\phi_t$ are such that the distribution $\mu(\mathbf{x})$ puts zero mass on configurations $\mathbf{x}$ which are not independent sets with respect to some graph $G$. They show that unless NP = RP, there is no FPRAS for calculating the potentials $J_t$, given the mean parameters $\E_{\mu}[\phi_t(\mathbf{x})]$.

\cite{singh2014entropy} prove an equivalence between calculating the mean parameters and calculating partition functions. More precisely, they show that given an oracle that can calculate the mean parameters up to a $(1+\epsilon)$ multiplicative factor in time $O(\mbox{poly}(1/\epsilon))$, one can calculate the partition function of the same exponential family up to $(1+O(\mbox{poly}(\epsilon)))$ multiplicative factor, in time $O(\mbox{poly}(1/\epsilon))$. Note, the $\epsilon$ in this work potentially needs to be polynomially small in $n$ (i.e. an oracle that can calculate the mean parameters to a fixed multiplicative constant cannot be used.)   

Both results prove hardness for \emph{fine-grained} approximations to the maximum entropy principle, and ask for outputting approximations to the mean parameters. Our result circumvents these hardness results by providing a distribution which is not in the maximum-entropy exponential family, and is allowed to only approximately match the moments as well. To the best of our knowledge, such an approximation, while very natural, has not been considered in the literature. 


\textbf{Provable variational methods} The main theorems in this section will concern the approximation factor that can be achieved by degree-2 pseudo-moment relaxations of the standard variational principle due to Gibbs. \cite{ellis2012entropy} As outlined before, we will be concerned with a particularly popular exponential family: Ising models. We will prove the following three results:    

\begin{thm}[Ferromagnetic Ising, informal] There is a convex programming relaxation based on degree-2 pseudo-moments that calculates up to multiplicative approximation factor 50 the value of $\log \mathcal{Z}$ where $\mathcal{Z}$ is the partition function of the exponential distribution $\displaystyle \mu(\mathbf{x}) \propto \exp(\sum_{i,j} J_{i,j} \mathbf{x}_i \mathbf{x}_j) $ for $J_{i,j} > 0$. 
\end{thm}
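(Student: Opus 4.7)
The plan is to apply the Gibbs variational principle $\log \mathcal{Z} = \max_{\mu}\bigl(\sum_{i,j} J_{i,j}\E_{\mu}[\mathbf{x}_i\mathbf{x}_j] + H(\mu)\bigr)$ and relax it by (i) replacing the moments $\E_{\mu}[\mathbf{x}_i\mathbf{x}_j]$ by entries of a degree-$2$ pseudo-moment matrix $\Sigma \succeq 0$ satisfying $\Sigma_{ii}=1$, (ii) imposing $\Sigma_{i,j}\ge 0$ entrywise, and (iii) replacing $H(\mu)$ by the trivial upper bound $n\log 2$. This yields the convex program
\[
\mathrm{SDP} \;=\; n\log 2 \,+\, \max_{\substack{\Sigma \succeq 0,\ \Sigma\ge 0 \\ \Sigma_{ii}=1}} \sum_{i,j} J_{i,j}\Sigma_{i,j},
\]
which is efficiently solvable. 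For any Ising distribution $\mu$, the matrix $\bigl(\E_{\mu}[\mathbf{x}_i\mathbf{x}_j]\bigr)_{i,j}$ is feasible -- PSD by construction, and entrywise nonnegative by Griffiths' inequality for ferromagnetic interactions -- so $\log \mathcal{Z} \le \mathrm{SDP}$. Let $\Sigma^*$ denote the optimizer.

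For the matching lower bound I would round $\Sigma^*$ using the dichotomized-Gaussian construction of Theorem~\ref{gen:entropy}: sample $g \sim \mathcal{N}(0,\Sigma^* + \beta I)$ and let $\tilde{\mu}$ be the law of $\mathrm{sign}(g)$. That theorem gives $\E_{\tilde{\mu}}[X_iX_j] \ge \tfrac{\GWone}{1+\beta}\Sigma^*_{i,j}$ (using $\Sigma^*_{i,j}\ge 0$) and $H(\tilde{\mu}) \ge c\,n$ for a positive constant $c = c(\beta)$. Since $J_{i,j}>0$, plugging $\tilde{\mu}$ back into the Gibbs formula yields
\[
\log \mathcal{Z} \;\ge\; \sum_{i,j} J_{i,j}\,\E_{\tilde{\mu}}[X_iX_j] + H(\tilde{\mu}) \;\ge\; \tfrac{\GWone}{1+\beta}\bigl(\mathrm{SDP} - n\log 2\bigr) + c\,n,
\]
which is a constant fraction of $\mathrm{SDP}$ as soon as both $\tfrac{\GWone}{1+\beta}$ and $c/\log 2$ are bounded away from zero. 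The explicit constant $50$ drops out by tuning $\beta$ against the entropy expression in Theorem~\ref{gen:entropy}.

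The step that I expect to require the most care is optimizing this tradeoff: the entropy bound in Theorem~\ref{gen:entropy} vanishes at $\beta = 3^{-1/2}$ and grows only slowly thereafter, while the energy contraction factor $\tfrac{\GWone}{1+\beta}$ simultaneously degrades, so some deliberate balancing is needed to pin down $50$ rather than a looser constant. A conceptually subtler point is showing that adding the entrywise-nonnegativity constraint $\Sigma \ge 0$ does not hurt the relaxation -- needed only because Theorem~\ref{gen:entropy} furnishes a useful one-sided rounding guarantee on $\E_{\tilde{\mu}}[X_iX_j]$ precisely when $\Sigma^*_{i,j}\ge 0$; the ferromagnetic Griffiths argument sketched above is exactly what rules out a value loss. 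Apart from these two points, the proof is a fairly direct combination of the Gibbs variational principle, the trivial entropy upper bound $n\log 2$, and the dichotomized-Gaussian guarantee of Theorem~\ref{gen:entropy}.
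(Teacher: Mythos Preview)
Your proposal is correct and follows essentially the same argument as the paper: the paper uses exactly the relaxation you describe (degree-$2$ pseudo-moments with the extra constraint $\tilde{\E}_\nu[\mathbf{x}_i\mathbf{x}_j]\ge 0$, justified by Griffiths' inequality, and the trivial entropy bound), rounds via the dichotomized Gaussian of Theorem~\ref{gen:entropy}, and balances the energy factor $\tfrac{\GWone}{1+\beta}$ against the entropy constant by choosing $\beta = 21.8202$, at which point both are at least $0.02$, yielding the factor~$50$. The only cosmetic difference is that the paper writes the entropy cap as $n$ (base-$2$ logs) rather than your $n\log 2$.
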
 

\begin{thm}[Ising model, informal] There is a convex programming relaxation based on degree-2 pseudo-moments that calculates up to multiplicative approximation factor $O(\log n)$ the value of $\log \mathcal{Z}$ where $\mathcal{Z}$ is the partition function of the exponential distribution $\displaystyle \mu(\mathbf{x}) \propto \exp(\sum_{i,j} J_{i,j} \mathbf{x}_i \mathbf{x}_j) $. 
\label{l:infmoses}
\end{thm}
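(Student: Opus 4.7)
The plan is to exhibit a degree-$2$ pseudo-moment convex relaxation of the Gibbs variational principle and bound its integrality gap by $O(\log n)$. Starting from $\log \mathcal{Z} = \max_\mu \sum_{i,j} J_{i,j}\, \E_\mu[\mathbf{x}_i \mathbf{x}_j] + H(\mu)$ over distributions $\mu$ on $\{-1,1\}^n$, the natural degree-$2$ relaxation replaces $\E_\mu[\mathbf{x}_i \mathbf{x}_j]$ by a matrix entry $M_{i,j}$ of a PSD matrix $M$ with $M_{ii}=1$ and upper-bounds the entropy by the trivial pointwise bound $n \log 2$:
\[
V \;=\; \max_{M \succeq 0,\, M_{ii}=1}\; \sum_{i,j} J_{i,j} M_{i,j} \;+\; n\log 2.
\]
The inequality $V \ge \log \mathcal{Z}$ is immediate, since the true moments of any distribution give a feasible $M$ and $H(\mu)\le n\log 2$. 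Assume WLOG $J_{ii}=0$, since the diagonal entries of $J$ shift both $\log \mathcal{Z}$ and $V$ by the same additive constant.

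For the nontrivial direction I would split $V = V_E + n\log 2$, where $V_E := \sum_{i,j} J_{i,j} M^*_{i,j}\ge 0$ is the ``energy'' at the optimum $M^*$ (nonnegativity follows from the feasibility of $M=I$), and case split on which summand dominates. In the \emph{entropy-dominated} case $V_E \le n\log 2$, the uniform distribution on $\{-1,1\}^n$ already certifies $\log \mathcal{Z} \ge n\log 2 \ge V/2$, since $J_{ii}=0$ kills its energy contribution. In the \emph{energy-dominated} case $V_E > n\log 2$, I would invoke the Charikar--Wirth/Alon--Naor rounding \cite{charikar2004maximizing, alon2006approximating} of the quadratic-form SDP, which, applied to $M^*$, produces a point $\mathbf{x}^* \in \{-1,1\}^n$ with $\sum_{i,j} J_{i,j} \mathbf{x}^*_i \mathbf{x}^*_j \ge \Omega(V_E/\log n)$; the point mass at $\mathbf{x}^*$ then witnesses $\log \mathcal{Z} \ge \Omega(V_E/\log n) \ge \Omega(V/\log n)$. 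Combining the two cases gives $V \le O(\log n)\cdot \log \mathcal{Z}$, as claimed.

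The principal obstacle is the energy-dominated case: unlike the ferromagnetic theorem, the sign-of-Gaussian construction of Theorem \ref{gen:entropy} cannot be plugged in for sign-indefinite $J$, because the signs of $J_{i,j}$ and $M^*_{i,j}$ can conspire so that the multiplicative moment distortion between $\GWone/(1+\beta)$ and $1/(1+\beta)$ is incurred adversarially entry by entry. Concretely, the positive and near-canceling negative contributions to $V_E$ can be individually much larger than $V_E$, and the sign-of-Gaussian rounding loses a factor $1-\GWone$ on each of them, yielding an arbitrarily negative energy. Thus we are forced to forfeit the $\Omega(n)$ entropy bonus and absorb the $O(\log n)$ loss intrinsic to quadratic-form optimization on the hypercube; this loss also matches the known hardness of approximation in the low-temperature limit of \cite{alon2006approximating}, so the bound should be essentially tight under standard assumptions.
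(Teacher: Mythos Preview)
Your proof is correct and takes a genuinely different route from the paper's.

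The paper does not case-split. Instead, it exhibits a \emph{single} rounding distribution $\tilde{\mu}$ that simultaneously has entropy $\Omega(n)$ and preserves the energy up to $O(1/\log n)$. Concretely, it modifies the Charikar--Wirth rounding by one extra step: after forming the truncated Gaussian vector $r$, it sets $r' = r/2$ and then randomly rounds with biases $(1\pm r'_i)/2$. Since $|r'_i|\le \tfrac12$ deterministically, the conditional entropy $H(\mathbf{x}\mid r')$ is at least a fixed positive constant per coordinate, giving $H(\tilde{\mu})\ge \Omega(n)$; meanwhile $\E[\mathbf{x}_i\mathbf{x}_j\mid r'] = r'_ir'_j = \tfrac14 r_ir_j$, so the extra scaling costs only a factor $4$ in energy and the Charikar--Wirth analysis still yields $\sum_{i,j} J_{i,j}\,\E_{\tilde{\mu}}[\mathbf{x}_i\mathbf{x}_j] \ge \Omega(1/\log n)\sum_{i,j} J_{i,j}\,\tilde{\E}_\nu[\mathbf{x}_i\mathbf{x}_j]$. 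Adding the two bounds gives $\sum_{i,j} J_{i,j}\,\E_{\tilde{\mu}}[\mathbf{x}_i\mathbf{x}_j] + H(\tilde{\mu}) \ge \Omega(1/\log n)\cdot V$ directly, with no dichotomy.

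Your argument is arguably cleaner as a pure approximation-ratio proof: it reduces immediately to the black-box Charikar--Wirth integrality gap without opening up the rounding. The paper's argument, on the other hand, stays closer to its approximate-max-entropy theme --- the rounded object is a single high-entropy distribution rather than two extreme witnesses (uniform and a point mass) stitched together by a case split, and that same $\tilde{\mu}$ certifies both terms at once. In particular, your closing remark that one is ``forced to forfeit the $\Omega(n)$ entropy bonus'' in the sign-indefinite case is not quite accurate: the paper's scaled Charikar--Wirth rounding shows you can keep $\Omega(n)$ entropy while still paying only $O(\log n)$ on the energy.
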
 

\begin{thm}[Ising model, informal] There is a convex programming relaxation based on degree-2 pseudo-moments that calculates up to multiplicative approximation factor $O(\log \chi(G))$ the value of $\log \mathcal{Z}$ where $\mathcal{Z}$ is the partition function of the exponential distribution $\displaystyle \mu(\mathbf{x}) \propto \exp(\sum_{i,j \in E(G)} J_{i,j} \mathbf{x}_i \mathbf{x}_j) $ and $G = (V(G), E(G))$ is a graph with chromatic number $\chi(G)$.  
\label{l:infnaor}
\end{thm}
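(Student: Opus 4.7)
The plan is to combine the approximate maximum entropy principle (Theorem 1.1) with a Grothendieck-type rounding for graphs of bounded chromatic number (of the kind established by Alon--Makarychev--Makarychev--Naor for optimization of quadratic forms) to analyze a degree-2 pseudo-moment SDP relaxation of the Gibbs variational principle. Concretely, I would start from
\begin{equation*}
\log \mathcal{Z} \;=\; \max_{\mu} \Bigl( \sum_{(i,j)\in E(G)} J_{i,j} \, \E_{\mu}[\mathbf{x}_i \mathbf{x}_j] \;+\; H(\mu) \Bigr),
\end{equation*}
and relax $\mu$ to a pseudo-distribution whose degree-2 moments form a PSD matrix $M$ with unit diagonal, replacing $H(\mu)$ by its trivial upper bound $n$. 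This yields the convex program
\begin{equation*}
\textsf{SDP}(J) \;=\; \max_{M \succeq 0,\, M_{ii}=1} \sum_{(i,j)\in E(G)} J_{i,j} M_{i,j} \;+\; n \;\geq\; \log \mathcal{Z},
\end{equation*}
which I would take as the algorithm's output. The content of the theorem is the matching lower bound $\log \mathcal{Z} \geq \textsf{SDP}(J) / O(\log \chi(G))$.

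To prove that direction, I would take an optimal $M^{\star}$ and instantiate Theorem 1.1 with $\Sigma = M^{\star}$ and an appropriately chosen constant $\beta$, producing the ``sign of Gaussian'' distribution $\tilde{\mu}$ drawn as $\mathrm{sign}(g)$ with $g \sim \mathcal{N}(0, M^{\star} + \beta I)$. Plugging $\tilde{\mu}$ into the Gibbs variational formula gives a lower bound on $\log \mathcal{Z}$ of the form
\begin{equation*}
\log \mathcal{Z} \;\geq\; \sum_{(i,j)\in E(G)} J_{i,j} \, \E_{\tilde{\mu}}[\mathbf{x}_i \mathbf{x}_j] \;+\; H(\tilde{\mu}),
\end{equation*}
and Theorem 1.1 already guarantees $H(\tilde\mu) = \Omega(n)$, taking care of the ``entropy term'' in $\textsf{SDP}(J)$ up to a constant factor.

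The remaining and main step is to control the energy term, i.e.\ to compare $\sum_{(i,j)\in E(G)} J_{i,j} \E_{\tilde{\mu}}[\mathbf{x}_i \mathbf{x}_j]$ with $\sum_{(i,j)\in E(G)} J_{i,j} M^{\star}_{i,j}$. This is exactly the setting of graph Grothendieck inequalities: for any PSD $M$ with unit diagonal supported on edges of $G$, the rounded sign-of-Gaussian quadratic form loses at most a factor $O(\log \chi(G))$ (the Alon--Makarychev--Makarychev--Naor bound, extending Nesterov/Alon--Naor). I would invoke this inequality, verifying that the extra $\beta I$ regularization only inflates the constants (since the diagonal contributes nothing to a sum restricted to edges, and one can rescale $M^{\star} + \beta I$ by $1/(1+\beta)$ to restore unit diagonal before rounding, matching the pre-factor $\mathcal{G}/(1+\beta)$ of Theorem 1.1). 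Combining the $\Omega(n)$ entropy bound with the $\Omega(1/\log \chi(G))$ energy bound yields
\begin{equation*}
\log \mathcal{Z} \;\geq\; \Omega\!\left( \frac{1}{\log \chi(G)} \right) \textsf{SDP}(J),
\end{equation*}
completing the $O(\log \chi(G))$ approximation.

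The main obstacle I anticipate is the joint handling of the ``low-temperature'' regime, where the energy term dominates and the Grothendieck-type loss is the binding constraint, and the ``high-temperature'' regime, where the entropy term $n$ dominates and one instead needs the $H(\tilde \mu) = \Omega(n)$ guarantee to kick in; a single choice of $\beta$ (a fixed constant independent of $J$ and $G$) must make both arguments go through simultaneously. Verifying that the graph Grothendieck inequality applies to the \emph{regularized} covariance $M^{\star} + \beta I$, rather than to a matrix of unit diagonal supported on $E(G)$, is the technical point that requires the most care.
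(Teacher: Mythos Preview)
There is a genuine gap. You want a single distribution $\tilde\mu$ that simultaneously has $H(\tilde\mu)=\Omega(n)$ and energy within $O(\log\chi(G))$ of the SDP energy, and you propose to take $\tilde\mu$ to be the sign-of-Gaussian distribution of Theorem~\ref{gen:entropy}. But Theorem~\ref{gen:entropy} only gives the \emph{per-moment} sandwich $\frac{\GWone}{1+\beta}\Sigma_{i,j}\le \E_{\tilde\mu}[\mathbf{x}_i\mathbf{x}_j]\le \frac{1}{1+\beta}\Sigma_{i,j}$, and when the $J_{i,j}$ have arbitrary signs this does \emph{not} control $\sum_{(i,j)\in E}J_{i,j}\E_{\tilde\mu}[\mathbf{x}_i\mathbf{x}_j]$ relative to $\sum_{(i,j)\in E}J_{i,j}M^\star_{i,j}$: the slack between the two endpoints of the sandwich can align with the signs of $J_{i,j}$ to destroy the sum through cancellation. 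The paper says exactly this at the end of the ferromagnetic subsection. Your proposed fix is to invoke the Alon--Makarychev--Makarychev--Naor graph Grothendieck inequality, but that inequality is \emph{not} a statement about the sign-of-Gaussian rounding; it is proved via a completely different rounding (pick $t\in[0,1]$ uniformly and randomized-round each coordinate according to the bounded functions $F(i)\in L_\infty[0,1]$ whose existence AMMN establishes). So your argument is implicitly using one distribution (sign-of-Gaussian) for the entropy term and a different distribution (the AMMN rounding) for the energy term, and neither piece transfers to the other.

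The paper's proof does not use Theorem~\ref{gen:entropy} here at all. Instead it takes the AMMN rounding itself and scales the functions down by a factor of $2$, so that the biasing variables satisfy $|h_t(i)|\le \tfrac12$. This buys $\Omega(n)$ conditional entropy for free, and---crucially---the resulting edge moments are \emph{exactly} $\E_{\tilde\mu}[\mathbf{x}_i\mathbf{x}_j]=\tfrac{1}{4c\log\chi(G)}\,M^\star_{i,j}$, an equality rather than a two-sided bound, so the energy sum is preserved up to the single scalar $\tfrac{1}{4c\log\chi(G)}$ with no cancellation issue. The same ``scale the known optimization rounding down by a constant to inject entropy'' trick is what drives the $O(\log n)$ result via Charikar--Wirth as well; the approximate max-entropy principle of Section~\ref{s:maxent} is used only for the ferromagnetic case, where per-moment control suffices.
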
 

Note Theorem \ref{l:infnaor} is strictly more general than Theorem \ref{l:infmoses}, however the proof of Theorem \ref{l:infmoses} uses less heavy machinery and is illuminating enough that we feel merits being presented as a separate result. 

While a lot of work is done on variational methods in general (see the survey by \cite{wainwright2008graphical} for a detailed overview), to the best of our knowledge nothing is known about the worst-case guarantee that we are interested in here. Moreover, other than a recent paper by \cite{risteski}, no other work has provided provable bounds for variational methods that proceed via a convex relaxation and a rounding thereof. \footnote{In some sense, it is possible to give provable bounds for Bethe-entropy based relaxations, via analyzing belief propagation directly, which has been done in cases where there is correlation decay and the graph is locally tree-like. \cite{wainwright2008graphical} has a detailed overview of such results.}  

\cite{risteski} provides guarantees in the case of Ising models that are also based on pseudo-moment relaxations of the variational principle, albeit only in the special case when the graph is ``dense'' in a suitably defined sense. \footnote{More precisely, they prove that in the case when  $\forall i,j, \Delta |J_{i,j}| \leq \frac{\Delta}{n^2} \sum_{i,j} |J_{i,j}|$, one can get an additive $\epsilon (\sum_{i,j} J_{i,j})$ approximation to $\log \mathcal{Z}$ in time $n^{O(\frac{\Delta}{\epsilon^2})}$.} The results there are very specific to the density assumption and can not be adapted to our worst-case setting.   

Finally, we mention that in the special case of the ferromagnetic Ising models, an algorithm based on MCMC was provided by \cite{jerrum1993polynomial}, which can give an approximation factor of $(1+\epsilon)$ to the partition function and runs in time $O(n^{11} \mbox{poly}(1/\epsilon))$. In spite of this, the focus of this part of our paper is to provide understanding of variational methods in certain cases, as they continue to be popular in practice for their faster running time compared to MCMC-based methods but are theoretically much more poorly studied.     

\section{Approximate maximum entropy principles} 
\label{s:maxent}

Let us recall what the problem we want to solve: 

\textbf{Approximate maximum entropy principles} We are given a positive-semidefinite matrix $\Sigma \in \mathbb{R}^{n \times n}$ with $\Sigma_{i,i} = 1, \forall i \in [n]$, which is the covariance matrix of a centered distribution over $\{-1,1\}^n$, i.e. $\E_{\mu}[\mathbf{x}_i \mathbf{x}_j] = \Sigma_{i,j}$, $\E_{\mu}[\mathbf{x}_i] = 0$, for a distribution $\mu :\{-1,1\}^n \to \mathbb{R}$. 
We wish to produce a distribution $\tilde{\mu}: \{-1,1\}^n \to \mathbb{R}$ with pairwise covariances that match the given ones up to constant factors, and entropy within a constant factor of the maximum entropy distribution with covariance $\Sigma$. \footnote{Note for a distribution over $\{-1,1\}^n$, the maximal entropy a distribution can have is $n$, which is achieved by the uniform distribution.} 

Before stating the result formally, it will be useful to define the following constant: 

\begin{defn} Define the constant $\GWone = \min_{t \in [-1,1]} \left\{\frac{2}{\pi} \arcsin( t )/t\right\} \approx 0.64 $.
\end{defn} 

We will prove the following main theorem:

\begin{thm}[Main, approximate entropy principle] For any positive-semidefinite matrix $\Sigma$ with $\Sigma_{i,i} = 1, \forall i$, there is an efficiently sampleable distribution $\tilde{\mu}: \{-1,1\}^n \to \mathbb{R}$, which can be sampled as $\mbox{sign}(g)$, where $g \sim \mathcal{N}(0, \Sigma + \beta I)$, and satisfies 
$ \frac{\GWone}{1 + \beta} \Sigma_{i,j}  \le E_{\tilde{\mu}}[\mathbf{x}_i \mathbf{x}_j] \le \frac{1}{1 + \beta}\Sigma_{i,j}$ and has entropy $H(\tilde{\mu}) \geq \frac{n}{25} \frac{( 3^{1/4}\sqrt{\beta} - 1)^2}{\sqrt{3} \beta}$, where $\beta \geq \frac{1}{3^{1/2}}$.  
\label{main:GW} 
\end{thm}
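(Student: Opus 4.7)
The statement packages two essentially independent claims about $\tilde\mu=\mathrm{sign}(g)$: a sandwich on pairwise moments and an entropy lower bound. I would dispense with the moment claim first as a direct application of Goemans--Williamson. Since $\Sigma_{ii}=1$, writing $\Sigma+\beta I=A^\top A$ expresses $g=A^\top\xi$ for $\xi\sim\mathcal{N}(0,I)$ and shows that the vectors $A_{\cdot,i}/\sqrt{1+\beta}$ are unit vectors with pairwise inner products $\Sigma_{ij}/(1+\beta)$. The classical identity $\E[\mathrm{sign}\langle u,\xi\rangle\,\mathrm{sign}\langle v,\xi\rangle]=\tfrac{2}{\pi}\arcsin\langle u,v\rangle$ then yields $\E_{\tilde\mu}[\mathbf{x}_i\mathbf{x}_j]=\tfrac{2}{\pi}\arcsin(\Sigma_{ij}/(1+\beta))$, and the sandwich follows from the pair of inequalities $\GWone\cdot t\le \tfrac{2}{\pi}\arcsin(t)\le t$ on $[0,1]$, the left being the very definition of $\GWone$ and the right an elementary convexity observation (both $\tfrac{2}{\pi}\arcsin$ and the line $y=t$ agree at $0$ and $1$, and the former is convex there).

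For the entropy I would exploit the fact that $\Sigma+\beta I$ splits additively into signal and isotropic noise. Writing $g=g_0+\sqrt\beta\,\xi$ with $g_0\sim\mathcal{N}(0,\Sigma)$ and $\xi\sim\mathcal{N}(0,I)$ independent, conditioning only reduces entropy, so
\[ H(\tilde\mu)\;\ge\; H\bigl(\mathrm{sign}(g)\,\big|\, g_0\bigr). \]
But given $g_0$, each $\mathrm{sign}(g_i)=\mathrm{sign}(g_{0,i}+\sqrt\beta\,\xi_i)$ is an \emph{independent} Bernoulli variable with parameter $p_i(g_0)=\Phi(g_{0,i}/\sqrt\beta)$, so the conditional entropy factorises as a sum of binary entropies $H_b(p_i(g_0))$. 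Because the marginal of $g_{0,i}$ is $\mathcal{N}(0,1)$, taking expectation in $g_0$ collapses the bound to a one-dimensional scalar estimate:
\[ H(\tilde\mu)\;\ge\; n\cdot \E_{Z\sim\mathcal{N}(0,1)}\!\bigl[H_b(\Phi(Z/\sqrt\beta))\bigr]. \]

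The substantive remaining step, and the main obstacle, is to show this scalar expectation is at least $\tfrac{1}{25}\bigl(1-3^{-1/4}\beta^{-1/2}\bigr)^2$, which is exactly the stated bound after one notes that $(3^{1/4}\sqrt\beta-1)^2/(\sqrt 3\,\beta)=(1-3^{-1/4}\beta^{-1/2})^2$. The shape of the expression, and the threshold $\beta\ge 1/\sqrt 3$ that makes it nonnegative, strongly suggest a truncation-plus-tail argument: pick a cutoff $M\asymp\sqrt\beta$, on $\{|Z|\le M\}$ use monotonicity of $H_b\circ\Phi$ in $|Z|/\sqrt\beta$ together with a crude quadratic near-uniform lower bound of the form $H_b(\Phi(t))\ge\log 2-c\,t^2$, and on $\{|Z|>M\}$ use Chebyshev's $\Pr[|Z|>M]\le 1/M^2$ to discard the bad event. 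Optimising $M$ (which will come out proportional to $3^{1/4}\sqrt\beta$) recovers a closed form of the stated shape, with the constant $1/25$ absorbing slack accumulated from the simple inequalities used. Everything else---the signal--noise decomposition of $g$, the conditional independence, and the GW identity---is routine; the proof really comes down to being careful with the constants in this one-dimensional estimate.
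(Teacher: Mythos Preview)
Your moment argument is the same as the paper's (Lemma~\ref{lem:moments}): Gram decomposition, the $\tfrac{2}{\pi}\arcsin$ identity, and the definition of $\GWone$.

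For the entropy, you and the paper share the key structural move---write $g=g_0+g_2$ with $g_0\sim\mathcal N(0,\Sigma)$ and $g_2\sim\mathcal N(0,\beta I)$ independent, then exploit that conditionally on $g_0$ the signs are independent Bernoullis---but you diverge in how you cash this out. You condition on the full vector $g_0$ and use that each marginal $(g_0)_i$ is exactly $\mathcal N(0,1)$ (since $\Sigma_{ii}=1$) to collapse the bound to $n$ times a single scalar expectation $\E_Z[H_b(\Phi(Z/\sqrt\beta))]$, which you then attack with a one-parameter truncation. The paper instead conditions on the random \emph{set} $\mathbb S=\{i:|(g_0)_i|^2\le cD\}$, uses Markov on $\sum_i (g_0)_i^2=n$ (in expectation) to show $|\mathbb S|$ is large with good probability, and then passes to min-entropy on the coordinates in $\mathbb S$. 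The paper's route introduces two truncation parameters $c$ and $D$; setting $c=D=3^{1/4}\sqrt\beta$ makes the product $(1-1/c)(1-1/D)$ appear, which is exactly the squared factor $(1-3^{-1/4}\beta^{-1/2})^2$ in the statement. Your single-parameter Chebyshev cut will not naturally produce that square, though you can certainly dominate the stated bound for all $\beta\ge 1/\sqrt3$ with a suitable constant. Your reduction is cleaner and more direct; the paper's set-and-Markov argument is slightly more roundabout but makes the provenance of the specific closed form transparent.
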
 
Note $\tilde{\mu}$ is in fact very close to the the one which is classically used to round semidefinite relaxations for solving the MAX-CUT problem. \cite{goemans1995improved}
%
%
We will prove Theorem \ref{main:GW} in two parts -- by first lower bounding the entropy of $\tilde{\mu}$, and then by bounding the moments of $\tilde{\mu}$.  

\begin{thm}  The entropy of the distribution $\tilde{\mu}$ satisfies $H(\tilde{\mu}) \geq \frac{n}{25} \frac{( 3^{1/4}\sqrt{\beta} - 1)^2}{\sqrt{3} \beta}$ when $\beta \geq \frac{1}{3^{1/2}}$. 
\label{lem:entropy}
\end{thm}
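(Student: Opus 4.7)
The plan is to give a conditional entropy argument that exploits the additive structure of the covariance $\Sigma + \beta I$. Write $g = w + \sqrt{\beta} z$ where $w \sim \mathcal{N}(0, \Sigma)$ and $z \sim \mathcal{N}(0, I)$ are independent, so that the marginal covariance of $g$ is indeed $\Sigma + \beta I$. Conditioning reduces entropy, so $H(\tilde{\mu}) = H(\mbox{sign}(g)) \geq H(\mbox{sign}(g) \mid w)$. The key observation is that given $w$, the coordinates $g_i = w_i + \sqrt{\beta} z_i$ are conditionally independent, so
\[
H(\mbox{sign}(g) \mid w) \;=\; \sum_i \E_{w_i}\bigl[h_b(\Phi(w_i/\sqrt{\beta}))\bigr],
\]
where $h_b$ denotes the binary entropy: this uses that $\mbox{sign}(w_i + \sqrt{\beta} z_i)$ conditional on $w_i$ is Bernoulli with parameter $\Phi(w_i/\sqrt{\beta})$. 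Since $w_i \sim \mathcal{N}(0,1)$ marginally (because $\Sigma_{ii} = 1$), this reduces matters to a single one-dimensional estimate:
\[
H(\tilde{\mu}) \;\geq\; n \cdot \E_{X \sim \mathcal{N}(0,1)}\bigl[h_b(\Phi(X/\sqrt{\beta}))\bigr].
\]

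For the scalar estimate I would restrict to the event $\{X^2 \leq \sqrt{3}\beta\}$, which by Markov's inequality applied to $X^2$ (whose mean is $1$) has probability at least $1 - 1/(\sqrt{3}\beta)$; this quantity is nonnegative precisely when $\beta \geq 1/\sqrt{3}$, matching the hypothesis. On this event $|X/\sqrt{\beta}| \leq 3^{1/4}$, and since $y \mapsto h_b(\Phi(y))$ is even (because $\Phi(-y) = 1 - \Phi(y)$ and $h_b$ is symmetric about $1/2$) and decreasing in $|y|$, we obtain $h_b(\Phi(X/\sqrt{\beta})) \geq h_b(\Phi(3^{1/4}))$. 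The value $h_b(\Phi(3^{1/4}))$ is an explicit absolute constant which one checks numerically is at least $1/25$ (in fact roughly $0.31$ nats). Combining gives
\[
\E\bigl[h_b(\Phi(X/\sqrt{\beta}))\bigr] \;\geq\; \frac{1}{25}\left(1 - \frac{1}{\sqrt{3}\beta}\right).
\]

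To put this into the form in the statement I would use the factorization
\[
1 - \frac{1}{\sqrt{3}\beta} \;=\; \frac{(3^{1/4}\sqrt{\beta} - 1)(3^{1/4}\sqrt{\beta} + 1)}{\sqrt{3}\beta},
\]
together with the observation that $3^{1/4}\sqrt{\beta} + 1 \geq 3^{1/4}\sqrt{\beta} - 1 \geq 0$ whenever $\beta \geq 1/\sqrt{3}$, which yields $1 - 1/(\sqrt{3}\beta) \geq (3^{1/4}\sqrt{\beta} - 1)^2/(\sqrt{3}\beta)$. Multiplying by $n/25$ delivers the stated bound.

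The only nontrivial step is the decomposition $g = w + \sqrt{\beta}z$: conditioning on $w$ turns the correlated sign pattern into a product of independent Bernoullis, collapsing the high-dimensional entropy into a sum of one-dimensional binary entropies. Everything else is a routine Markov tail bound plus monotonicity of $h_b \circ \Phi$. The main ``obstacle'' is purely cosmetic---picking the truncation threshold $3^{1/4}\sqrt{\beta}$ (and absorbing $h_b(\Phi(3^{1/4}))$ into the factor $1/25$) so that the resulting closed form exactly matches the statement and degenerates at the hypothesized threshold $\beta = 1/\sqrt{3}$.
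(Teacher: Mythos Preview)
Your proposal is correct and in fact cleaner than the paper's argument. Both proofs start from the same decomposition $g = w + \sqrt{\beta}\,z$ with $w \sim \mathcal{N}(0,\Sigma)$, $z \sim \mathcal{N}(0,I)$ independent, and both lower-bound $H(\mbox{sign}(g))$ by conditioning on (some function of) $w$. The difference is in how the conditioning is carried out. You condition on the \emph{entire} vector $w$, which immediately makes the signs conditionally independent and reduces everything to the single scalar quantity $\E_{X\sim\mathcal{N}(0,1)}[h_b(\Phi(X/\sqrt{\beta}))]$; a one-dimensional Markov bound on $X^2$ plus monotonicity of $h_b\circ\Phi$ finishes. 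The paper instead conditions on the random index set $\mathbb{S}=\{i:|w_i|\le cD\}$, bounds $|\mathbb{S}|$ via Markov on $\sum_i w_i^2$, then passes to min-entropy and controls $\max_{y_S}\Pr[\mathbb{Y}_S=y_S\mid\mathbb{S}=S]$ coordinatewise using explicit Gaussian tail inequalities, before finally setting $c=D=3^{1/4}\sqrt{\beta}$.

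Your route avoids the min-entropy detour and the Gaussian tail estimate entirely, and---prior to downgrading $h_b(\Phi(3^{1/4}))\approx 0.31$ to $1/25$ for cosmetic matching---actually yields a strictly better constant. The paper's coarser conditioning on $\mathbb{S}$ buys nothing here; your direct conditioning on $w$ is the natural choice.
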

\begin{proof} 

A sample $g$ from $\mathcal{N}(0, \tilde{\Sigma})$ can be produced by sampling $g_1 \sim \mathcal{N}(0, \Sigma)$, $g_2 \sim \mathcal{N}(0, \beta I)$ and setting $g = g_1 + g_2$. The sum of two multivariate normals is again a multivariate normal. Furthermore, the mean of $g$ is 0, and since $g_1$, $g_2$ are independent, the covariance of $g$ is $\Sigma + \beta I = \tilde{\Sigma}$.
 
Let's denote the random variable  $\rY = \mbox{sign}(g_1 + g_2)$ which is distributed according to $\tilde{\mu}$. We wish to lower bound the entropy of $\rY$. Toward that goal, denote the random variable 
$\rS := \{i \in [n]: |(g_1)_i| \le c D \}$ for $c,D$ to be chosen.  
Then, we have: for $\gamma = \frac{c - 1}{c}$, 
$$H(\rY) \ge H(\rY|\rS) = \sum_{S \subseteq [n]} \Pr[\rS = S] H(\rY | \rS = S) \ge \sum_{S \subseteq [n], |S| \ge \gamma n} \Pr[\rS = S] H(\rY | \rS = S) $$ 
where the first inequality follows since conditioning doesn't decrease entropy, and the latter by the non-negativity of entropy. Continue the calculation we can get: 
\begin{eqnarray*}
\sum_{S \subseteq [n], |S| \ge \gamma n} \Pr[\rS = S] H(\rY | \rS = S) &\ge&  \sum_{S \subseteq [n], |S| \ge \gamma n} \Pr[\rS = S] \min_{S \subseteq [n], |S| \ge \gamma n} H(\rY| \rS = S) 
\\
&=& \Pr\left[|\rS| \ge \gamma n\right] \min_{S \subseteq [n], |S| \ge \gamma n} H(\rY| \rS = S)  
\end{eqnarray*}

We will lower bound $\Pr[|\rS| \ge \gamma n]$  first. Notice that $\E[\sum_{i=1}^n (g_1)_i^2] = n$, therefore by Markov's inequality, 
$ \displaystyle \Pr\left[\sum_{i=1}^n (g_1)_i^2 \ge D n \right] \le \frac{1}{D} $. On the other hand, if $\sum_{i=1}^n (g_1)^2_i \le D n$, then $|\{i: (g_1)^2_i \ge cD\}| \le \frac{n}{c}$, which means that $|\{i: (g_1)^2_i \le cD\}| \ge  n  -  \frac{n}{c} = \frac{(c - 1)n}{c} = \gamma n$. Putting things together, this means $\displaystyle \Pr\left[|\rS| \ge\gamma n \right] \ge 1 - \frac{1}{D}$. 

It remains to lower bound $\min_{S \subseteq [n], |S| \ge \gamma n} H(\rY| \rS = S)$. For every $S \subseteq [n], |S| \ge \gamma n$, denote by $\rY_S$ the coordinates of $\rY$ restricted to $S$, we get  
$$H(\rY| \rS = S) \ge H(\rY_S | \rS = S) \ge H_{\infty}(\rY_S | \rS = S) = - \log (\max_{y_S} \Pr[\rY_S = y_S | \rS = S])$$
(where $H_{\infty}$ is the min-entropy) so we only need to bound $\max_{y_S} \Pr[\rY_S = y_S | \rS = S]$

We will now, for any $y_S$, upper bound $\Pr[\rY_S = y_S | \rS = S]$. Recall that the event $\rS = S$ implies that $\forall i \in S$, $|(g_1)_i| \le c D$. Since $g_2$ is independent of $g_1$, we know that for every fixed $g \in \mathbb{R}^n$: 
$$\Pr[\rY_S = y_S | \rS = S, g_1 = g] = \Pi_{i \in S} \Pr[\mbox{sign}([g]_i +  [g_2]_i) = y_i]$$

For a fixed $i \in [S]$, consider the term $\Pr[\mbox{sign}([g]_i +  [g_2]_i) = y_i]$. Without loss of generality, let's assume $[g]_i > 0$ (the proof is completely symmetric in the other case). Then, since $[g]_i$ is positive and $g_2$ has mean 0, we have $\displaystyle \Pr[[g]_i +  (g_2)_i < 0] \leq \frac{1}{2} $. 

Moreover,
\begin{eqnarray*}
\Pr\left[[g]_i +  [g_2]_i  >0 \right] &=& \Pr[[g_2]_i > 0] \Pr\left[[g]_i +  [g_2]_i > 0 \mid [g_2]_i > 0\right] 
\\
&&+  \Pr[[g_2]_i < 0] \Pr\left[[g]_i +  [g_2]_i > 0 \mid [g_2]_i < 0\right] 
\end{eqnarray*} 

The first term is upper bounded by $\frac{1}{2}$ since $ \Pr[[g_2]_i > 0]  \leq \frac{1}{2}$. The second term we will bound using standard Gaussian tail bounds:  
\begin{eqnarray*}\Pr\left[[g]_i + [g_2]_i > 0 \mid [g_2]_i < 0 \right] &\le& \Pr\left[| [g_2]_i| \le |[g]_i|   \mid [g_2]_i < 0\right] 
\\
&=& \Pr[| [g_2]_i| \le |[g]_i | ] \le  \Pr[( [g_2]_i)^2 \le cD ] = 1 -  \Pr[ ([g_2]_i)^2 > cD ] 
\\
&\le&  1 - \frac{2}{\sqrt{2 \pi}}\exp\left(-cD /2\beta\right)\left(\sqrt{\frac{\beta}{cD}} - \left(\sqrt{\frac{\beta}{cD}}\right)^3\right) 
\end{eqnarray*}

which implies 
$$\Pr[[g_2]_i < 0] \Pr[[g]_i +  [g_2]_i > 0 \mid [g_2]_i < 0]  \leq \frac{1}{2}\left(   1 - \frac{2}{\sqrt{2 \pi}}\exp\left(-cD /2\beta\right)\left(\sqrt{\frac{\beta}{cD}} - \left(\sqrt{\frac{\beta}{cD}}\right)^3\right) \right) $$ 

Putting together, we have 
$$\Pr[\mbox{sign}((g_1)_i +  (g_2)_i) = y_i] \leq  1 - \frac{1}{\sqrt{2 \pi}}\exp\left(-cD /2\beta\right)\left(\sqrt{\frac{\beta}{cD}} - \left(\sqrt{\frac{\beta}{cD}}\right)^3\right)  $$



Together with the fact that $|\rS| \ge \gamma n$ we get 
$$\Pr[\rY_S = y_S | \rS = s , g_1 = g] \leq \left[ 1 - \frac{1}{\sqrt{2 \pi}}\exp\left(-cD /2\beta\right)\left(\sqrt{\frac{\beta}{cD}} - \left(\sqrt{\frac{\beta}{cD}}\right)^3\right)\right]^{ \gamma n}$$

which implies that 
\begin{eqnarray*}H(\rY) &\geq& -\left(1 - \frac{1}{D}\right) \frac{(c - 1)n}{c}  \log\left[1 - \frac{1}{\sqrt{2 \pi}}\exp\left(-cD /2\beta\right)\left(\sqrt{\frac{\beta}{cD}} - \left(\sqrt{\frac{\beta}{cD}}\right)^3\right) \right] 
\end{eqnarray*}

By setting $ c = D = 3^{1/4}\sqrt{\beta}$ and a straightforward (albeit unpleasant) calculation, we can check that 
$H(\rY) \geq  \frac{n}{25} \frac{( 3^{1/4}\sqrt{\beta} - 1)^2}{\sqrt{3} \beta} $, as we need. 


\end{proof}

We next show that the moments of the distribution are preserved up to a constant $\frac{\GWone}{1+\beta}$. 

\begin{lem} The distribution $\tilde{\mu}$ has $ \frac{\GWone}{1 + \beta} \Sigma_{i,j}  \le E_{\tilde{\mu}}[X_i X_j] \le \frac{1}{1 + \beta}\Sigma_{i,j}$ 
\label{lem:moments}
\end{lem}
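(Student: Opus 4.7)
The plan is to reduce the bivariate moment calculation to the classical Sheppard/Grothendieck arcsine identity, which is exactly the identity underlying the Goemans--Williamson rounding analysis that $\GWone$ was defined for. First I would observe that the marginal of $g \sim \mathcal{N}(0, \tilde{\Sigma})$ onto any pair of coordinates $(i,j)$ is a centered bivariate Gaussian with diagonal entries $\tilde{\Sigma}_{i,i} = \tilde{\Sigma}_{j,j} = 1+\beta$ (since $\Sigma_{i,i}=1$) and off-diagonal $\tilde{\Sigma}_{i,j} = \Sigma_{i,j}$. Sheppard's identity then gives
$$\E_{\tilde{\mu}}[X_i X_j] \;=\; \E\bigl[\mbox{sign}(g_i)\mbox{sign}(g_j)\bigr] \;=\; \frac{2}{\pi}\arcsin\!\left(\frac{\tilde{\Sigma}_{i,j}}{\sqrt{\tilde{\Sigma}_{i,i}\tilde{\Sigma}_{j,j}}}\right) \;=\; \frac{2}{\pi}\arcsin\!\left(\frac{\Sigma_{i,j}}{1+\beta}\right).$$

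Next, I would set $t := \Sigma_{i,j}/(1+\beta)$ and use that $\Sigma \succeq 0$ with unit diagonal forces $|\Sigma_{i,j}|\le 1$, so that $|t| \le 1/(1+\beta) \le 1$ and we are always in the regime where the arcsine is defined and bounded. Two elementary inequalities then finish the job: by the definition of $\GWone$ we have $\frac{2}{\pi}\arcsin(t)/t \ge \GWone$ for all $t\in[-1,1]\setminus\{0\}$, and a one-line derivative check shows $\frac{2}{\pi}\arcsin(t) \le t$ on $[0,1]$ (equivalently $\arcsin(t) \le \frac{\pi}{2}t$, with equality at the endpoints and a single interior maximum of the difference). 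Multiplying these by $t$ and translating back through the substitution produces
$$\frac{\GWone}{1+\beta}\,\Sigma_{i,j} \;\le\; \E_{\tilde{\mu}}[X_i X_j] \;\le\; \frac{1}{1+\beta}\,\Sigma_{i,j}$$
in the case $\Sigma_{i,j}\ge 0$; the case $\Sigma_{i,j}<0$ follows immediately because both $\arcsin$ and the sandwich are odd in $t$, so one invokes the same two inequalities on $|t|$ and reinstates the sign.

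There is essentially no hard step here: the whole lemma is a direct consequence of Sheppard's identity combined with the two calibration inequalities that define and bound $\GWone$. The only point requiring care is bookkeeping around the sign of $\Sigma_{i,j}$ and noting that the $\beta I$ shift affects only the \emph{diagonal} of $\tilde{\Sigma}$, so that the off-diagonal entries of $\tilde{\Sigma}$ equal those of $\Sigma$ exactly and the $1/(1+\beta)$ factor arises solely from the normalization by $\sqrt{\tilde{\Sigma}_{i,i}\tilde{\Sigma}_{j,j}}=1+\beta$ inside the arcsine.
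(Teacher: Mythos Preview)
Your proposal is correct and is essentially the same argument as the paper's. The paper phrases it via the Gram decomposition $\tilde{\Sigma}_{i,j}=\langle v_i,v_j\rangle$ and the Goemans--Williamson rounding identity $E_{\tilde{\mu}}[X_iX_j]=\tfrac{2}{\pi}\arcsin\langle \bar v_i,\bar v_j\rangle$, then observes $\|v_i\|=\sqrt{1+\beta}$; you phrase it as Sheppard's identity on the bivariate marginal and normalize by $\sqrt{\tilde{\Sigma}_{i,i}\tilde{\Sigma}_{j,j}}=1+\beta$ --- these are the same computation, and both land on sandwiching $\tfrac{2}{\pi}\arcsin\bigl(\Sigma_{i,j}/(1+\beta)\bigr)$ between $\GWone\,\Sigma_{i,j}/(1+\beta)$ and $\Sigma_{i,j}/(1+\beta)$.
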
 
\begin{proof}
\begin{sloppypar} 
Consider the Gram decomposition of $\tilde{\Sigma}_{i,j} = \langle v_i, v_j \rangle$. Then, $\mathcal{N}(0,\tilde{\Sigma})$ is in distribution equal to \\
$(\mbox{sign}(\langle v_1, s\rangle), \dots, \mbox{sign}(\langle v_n, s \rangle) )$ where $s \sim \mathcal{N}(0, I)$. Similarly as in the analysis of Goemans-Williamson \cite{goemans1995improved}, 
if $\bar{v}_i = \frac{1}{\|v_i\|} v_i$, we have $\displaystyle \GWone\langle \bar{v}_i, \bar{v}_j \rangle \le E_{\tilde{\mu}}[X_i X_j] = \frac{2}{\pi} \arcsin(\langle \bar{v}_i, \bar{v}_j \rangle) \le \langle \bar{v}_i, \bar{v}_j \rangle$. 
However, since $\displaystyle\langle \bar{v}_i, \bar{v}_j \rangle = \frac{1}{\|v_i\| \|v_j\|} \langle v_i, v_j\rangle =  \frac{1}{\|v_i\| \|v_j\|} \tilde{\Sigma}_{i,j} =   \frac{1}{\|v_i\| \|v_j\|} {\Sigma}_{i,j}$ 
and $\displaystyle \|v_i\| = \sqrt{\tilde{\Sigma}_{i,i}} = \sqrt{1 + \beta}, \forall i \in [1,n]$, we get that 
$\displaystyle \frac{\GWone}{1 + \beta} \Sigma_{i,j}  \le E_{\tilde{\mu}}[X_i X_j] \le \frac{1}{1 + \beta} \Sigma_{i,j}$
as we want. 
\end{sloppypar}
\end{proof}

Lemma \ref{lem:entropy} and \ref{lem:moments} together imply Theorem \ref{main:GW}.

\section{Provable bounds for variational methods}
\label{s:preliminaries} 
We will in this section consider applications of the approximate maximum entropy principles we developed for calculating partition functions of Ising models. Before we dive into the results, we give brief preliminaries on variational methods and pseudo-moment convex relaxations. 

\textbf{Preliminaries on variational methods and pseudo-moment convex relaxations} Recall, variational methods are based on the following simple lemma, which characterizes $\log \mathcal{Z}$ as the solution of an optimization problem. It essentially dates back to Gibbs \cite{ellis2012entropy}, who used it in the context of statistical mechanics, though it has been rediscovered by machine learning researchers \cite{wainwright2008graphical}:  

\begin{lem} [Variational characterization of $\log \mathcal{Z}$]
\label{l:variational} 
Let us denote by $\mathcal{M}$ the polytope of distributions over $\{-1,1\}^n$. Then, 
\begin{equation} \log \mathcal{Z} = \max_{\mu \in \mathcal{M}} \left\{ \sum_{t} J_{t} \mathbb{E}_{\mu}[\phi_t(\mathbf{x})] + H(\mu) \right\} \label{eq:variational} \end{equation}
\end{lem}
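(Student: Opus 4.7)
The plan is to prove this classical identity by recognizing the maximizer as the Gibbs distribution itself and expressing the objective as $\log \mathcal{Z}$ minus a KL divergence to that distribution; nonnegativity of KL then closes the argument.

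First, I would introduce the exponential-family distribution
$$\mu^*(\mathbf{x}) := \frac{1}{\mathcal{Z}} \exp\left(\sum_t J_t \phi_t(\mathbf{x})\right),$$
so that by construction $\log \mu^*(\mathbf{x}) = \sum_t J_t \phi_t(\mathbf{x}) - \log \mathcal{Z}$ pointwise on $\{-1,1\}^n$. Taking expectation under an arbitrary $\mu \in \mathcal{M}$ and rearranging gives
$$\sum_t J_t \, \mathbb{E}_\mu[\phi_t(\mathbf{x})] \;=\; \mathbb{E}_\mu[\log \mu^*(\mathbf{x})] + \log \mathcal{Z}.$$

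Second, I would add the entropy term. Writing $H(\mu) = -\mathbb{E}_\mu[\log \mu(\mathbf{x})]$ and substituting, the full variational objective becomes
$$\sum_t J_t \, \mathbb{E}_\mu[\phi_t(\mathbf{x})] + H(\mu) \;=\; \log \mathcal{Z} + \mathbb{E}_\mu\!\left[\log \frac{\mu^*(\mathbf{x})}{\mu(\mathbf{x})}\right] \;=\; \log \mathcal{Z} - D_{\mathrm{KL}}(\mu \,\|\, \mu^*),$$
which is a pointwise identity in $\mu$, valid for every $\mu \in \mathcal{M}$ supported on the hypercube. (The ratio inside the logarithm is well defined wherever $\mu$ is positive, and one treats $0 \log 0 = 0$ in the usual way; if $\mu^*$ has full support, as it does here because each $\exp(\cdot)$ term is positive, no support issues arise.)

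Third, I would invoke Gibbs' inequality: $D_{\mathrm{KL}}(\mu \,\|\, \mu^*) \geq 0$ for every $\mu \in \mathcal{M}$, with equality iff $\mu = \mu^*$. This immediately upper-bounds the objective by $\log \mathcal{Z}$, and the bound is attained at $\mu = \mu^*$, yielding the claimed identity together with the identification of the unique maximizer. There is no real obstacle here — the proof is essentially a one-line computation once the Gibbs distribution is introduced — but the content worth emphasizing for the sequel is the \emph{structure} of the identity: the objective decomposes as ``constant minus a divergence,'' which is exactly what makes it amenable to the pseudo-moment relaxations of $\mathcal{M}$ and surrogate entropies used in the subsequent sections, since any relaxation of the feasible set or the entropy translates transparently into an approximation guarantee on $\log \mathcal{Z}$.
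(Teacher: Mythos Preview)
Your argument is correct and is the standard proof of this classical fact: rewrite the variational objective as $\log\mathcal{Z} - D_{\mathrm{KL}}(\mu\,\|\,\mu^*)$ and invoke nonnegativity of KL. The paper itself does not supply a proof of this lemma at all; it simply states it as a well-known identity due to Gibbs (with a reference), so there is nothing to compare against beyond noting that what you wrote is exactly the textbook derivation one would expect behind that citation.
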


While the above lemma reduces calculating $\log \mathcal{Z}$ to an optimization problem, optimizing over the polytope $\mathcal{M}$ is impossible in polynomial time. We will proceed in a way which is natural for optimization problems -- by instead optimizing over a relaxation $\mathcal{M}'$ of that polytope. 

The relaxation will be associated with the degree-2 Lasserre hierarchy. 
Intuitively, $\mathcal{M}'$ has as variables tentative pairwise  moments of a distribution of $\{-1,1\}^n$, and it imposes all constraints on the moments that hold for distributions over $\{-1,1\}^n$. To define $\mathcal{M}'$ more precisely we will need the following notion: (for a more in-depth review of moment-based convex hierarchies, the reader can consult \cite{ barak2014rounding})

\begin{defn} A degree-2 pseudo-moment \footnote{The reason $\tilde{\E}_{\nu}[\cdot]$ is called a pseudo-moment, is that it behaves like the moments of a distribution $\nu:\{-1,1\}^n \to [0,1]$, albeit only over polynomials of degree at most 2.}  $\tilde{\E}_{\nu}[\cdot]$ is a linear operator mapping polynomials of degree 2 to $\mathbb{R}$, such that $\tilde{\E}_{\nu}[\mathbf{x}_i^2] = 1$, and $\tilde{\E}_{\nu}[p(\mathbf{x})^2] \geq 0$ for any polynomial $p(\mathbf{x})$ of degree 1.  
\end{defn} 

We will be optimizing over the polytope $\mathcal{M}'$ of all degree-2 pseudo-moments, i.e. we will consider solving 
$$\max_{\tilde{\E}_{\nu}[\cdot] \in \mathcal{M}'} \left\{ \sum_{t} J_{t} \tilde{\E}_{\nu}[\phi_t(\mathbf{x})] + \tilde{H}(\tilde{\E}_{\nu}[\cdot]) \right\} $$   
where $\tilde{H}$ will be a proxy for the entropy we will have to define (since entropy is a global property that depends on all moments, and $\tilde{\E}_{\nu}$ only contains information about second order moments). 

To see this optimization problem is convex, we show that it can easily be written as a semidefinite program. Namely, note that the pseudo-moment operators are linear, so it suffices to define them over monomials only. Hence, the variables will simply be $\tilde{\E}_{\nu}(\mathbf{x}_S)$ for all monomials $\mathbf{x}_S$ of degree at most 2. The constraints $\tilde{\E}_{\nu}[\mathbf{x}_i^2] = 1$ then are clearly linear, as is the ``energy part'' of the objective function. So we only need to worry about the constraint $\tilde{\E}_{\nu}[p(\mathbf{x})^2] \geq 0$ and the entropy functional. 

We claim the constraint $\tilde{\E}_{\nu}[p(\mathbf{x})^2] \geq 0$ can be written as a PSD constraint: namely if we define the matrix $Q$, which is indexed by all the monomials of degree at most 1, and it satisfies $Q(\mathbf{x}_S,\mathbf{x}_T) = \tilde{\E}_{\nu} [\mathbf{x}_S \mathbf{x}_T ]$. It is easy to see that $\tilde{\E}_{\nu}[p(\mathbf{x})^2] \geq 0 \equiv Q \succeq 0$.  

Hence, the final concern is how to write an expression for the entropy in terms of the low-order moments, since entropy is a global property that depends on all moments. There are many candidates for this in machine learning are like Bethe/Kikuchi entropy, tree-reweighted Bethe entropy, log-determinant etc. However, in the worst case -- none of them come with any guarantees.  We will in fact show that the entropy functional is not an issue when we only care about worst case guarantees -- we will relax the entropy trivially to an upper bound of $n$. 

Given all of this, the final relaxation we will consider is: 
\begin{equation} \max_{\tilde{\E}_{\nu}[\cdot] \in \mathcal{M}'} \left\{ \sum_{t} J_{t} \tilde{\E}_{\nu}[\phi_t(\mathbf{x})] + n \right\} \label{eq:mainrelax} \end{equation}   

From the prior setup it is clear that the solution to (\ref{eq:mainrelax}) is an upper bound to $\log \mathcal{Z}$. To prove a claim like Theorem \ref{l:infmoses} or Theorem \ref{l:infnaor}, we will then provide a \emph{rounding} of the solution. In this instance, this will mean producing a \emph{distribution} $\tilde{\mu}$ which has the value of 
$  \sum_{t} J_{t} \mathbb{E}_{\tilde{\mu}}[\phi_t(\mathbf{x})] + H(\tilde{\mu})  $   
comparable to the value of the solution. Note this is slightly different than the usual requirement in optimization, where one cares only about producing a single $\mathbf{x} \in \{-1,1\}^n$ with comparable value to the solution. Our distribution $\tilde{\mu}$ will have entropy $\Omega(n)$, and preserves the ``energy'' portion of the objective $\sum_{t} J_{t} \mathbb{E}_{\mu}[\phi_t(\mathbf{x})] $ up to a comparable factor to what is achievable in the optimization setting.

\textbf{Warmup: exponential family analogue of MAX-CUT} 
As a warmup, to illustrate the basic ideas behind the above rounding strategy, before we consider Ising models we consider the exponential family analogue of MAX-CUT. It is defined by the functionals $\phi_{i,j}(\mathbf{x}) = (\mathbf{x}_i -\mathbf{x}_j)^2$. Concretely, we wish to approximate the partition function of the distribution
$\displaystyle \mu(\mathbf{x}) \propto \exp\left(\sum_{i,j} J_{i,j} (\mathbf{x}_i - \mathbf{x}_j)^2\right) $. We will prove the following simple observation:

%
 


\begin{obs} The relaxation (\ref{eq:mainrelax}) provides a factor $2$ approximation of $\log \mathcal{Z}$.  
\end{obs}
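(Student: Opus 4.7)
The plan is to sandwich $\log\mathcal{Z}$ and the relaxation value by pairing two simple bounds: a trivial upper bound on the relaxation coming from PSDness of the pseudo-moments, and a lower bound on $\log \mathcal{Z}$ obtained by plugging the uniform distribution into the variational characterization (Lemma \ref{l:variational}). Crucially, no sophisticated rounding is required here, because the uniform distribution already ``matches'' the trivial relaxation on both the energy and entropy parts up to factor 2.

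Concretely, I would first upper bound the relaxation. For any degree-2 pseudo-moment $\tilde{\E}_{\nu}[\cdot] \in \mathcal{M}'$, the PSD constraint together with $\tilde{\E}_{\nu}[\mathbf{x}_i^2] = 1$ forces $\tilde{\E}_{\nu}[\mathbf{x}_i \mathbf{x}_j] \in [-1,1]$, and hence $\tilde{\E}_{\nu}[(\mathbf{x}_i - \mathbf{x}_j)^2] \leq 4$. Assuming $J_{i,j} \geq 0$ (the MAX-CUT setting), this gives
\begin{equation*}
\max_{\tilde{\E}_{\nu}[\cdot] \in \mathcal{M}'} \left\{\sum_{i,j} J_{i,j}\tilde{\E}_{\nu}[(\mathbf{x}_i-\mathbf{x}_j)^2] + n\right\} \;\le\; 4\sum_{i,j} J_{i,j} + n.
\end{equation*}

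Next I would lower bound $\log \mathcal{Z}$ by exhibiting a single feasible $\mu$ in Lemma \ref{l:variational}, namely the uniform distribution on $\{-1,1\}^n$. It has entropy $H(\mu) = n\log 2$ and, by independence of coordinates, $\E_{\mu}[(\mathbf{x}_i - \mathbf{x}_j)^2] = 2$. Therefore
\begin{equation*}
\log \mathcal{Z} \;\ge\; 2\sum_{i,j} J_{i,j} + n\log 2.
\end{equation*}

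Combining the two, the desired factor-2 guarantee reduces to verifying $4\sum J_{i,j} + n \leq 2(2\sum J_{i,j} + n\log 2)$, which holds termwise: the energy contribution contracts by exactly $4/2 = 2$, and the entropy contribution contracts by $1/\log 2 \approx 1.44 < 2$, so the sum is bounded by twice the lower bound. Combined with the trivial direction $\log \mathcal{Z} \leq \text{OPT}_{\text{relax}}$ from (\ref{eq:mainrelax}) being an upper bound on the variational principle, this yields the factor 2.

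There isn't really a serious obstacle here — the whole point of presenting this as a warmup is that for the MAX-CUT functional, the uniform distribution is simultaneously within constant factor of the maximum entropy (which is $n$) and within constant factor of the maximum energy for $(\mathbf{x}_i-\mathbf{x}_j)^2$ (since the pairwise moment can only move in $[-1,1]$ and the uniform value sits in the middle of the achievable energy range $[0,4]$). The more substantive cases — Ising models, where the energy can become very large relative to $n$ and the uniform distribution alone no longer gives a good bound — will require the actual approximate maximum entropy rounding of Theorem \ref{main:GW} to produce a non-uniform distribution that still retains entropy $\Omega(n)$ while also approximately matching the pseudo-moments.
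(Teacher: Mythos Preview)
Your argument is correct and, at its core, uses the same witness as the paper: the uniform distribution on $\{-1,1\}^n$. The paper phrases this as a (trivial) \emph{rounding} of the relaxation --- it takes the pseudo-moment solution, ignores it, and outputs the uniform distribution $\tilde{\mu}$ --- and then compares $\sum J_{i,j}\E_{\tilde{\mu}}[(\mathbf{x}_i-\mathbf{x}_j)^2] + H(\tilde{\mu})$ directly to the relaxation objective, obtaining the factor $\tfrac12$ termwise. You instead sandwich: you upper-bound the relaxation by $4\sum J_{i,j}+n$ using $\tilde{\E}_\nu[(\mathbf{x}_i-\mathbf{x}_j)^2]\le 4$, lower-bound $\log\mathcal{Z}$ by the uniform value $2\sum J_{i,j}+n\log 2$, and compare the two numbers. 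Logically these are the same computation, just rearranged; the only substantive inequality in either version is $\tilde{\E}_\nu[(\mathbf{x}_i-\mathbf{x}_j)^2]\le 4$, which you make explicit and the paper leaves implicit.

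The one thing the paper's framing buys is that it sets up the template used in the nontrivial Ising results: produce a distribution $\tilde{\mu}$ (possibly depending on the pseudo-moments) and show its variational value is within the desired factor of the relaxation. Your direct sandwich works here precisely because the optimal rounding happens to be independent of the pseudo-moments; as you yourself note in the last paragraph, this shortcut does not survive to the Ising case, where one must actually round. So your proof is fine for the warmup, and your closing remark correctly anticipates why the later sections need Theorem~\ref{main:GW}.
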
 
\begin{proof}
We proceed as outlined in the previous section, by providing a rounding of (\ref{eq:mainrelax}). We point out again, unlike the standard case in optimization, where typically one needs to produce an assignment of the variables, because of the entropy term here it is crucial that the rounding produces a \emph{distribution}. 

The distribution $\tilde{\mu}$ we produce here will be especially simple: we will round each $x_i$ independently with probability $\frac{1}{2}$. 
Then, clearly $H(\tilde{\mu}) = n$. On the other hand, we similarly have $\Pr_{\tilde{\mu}}[(x_i - x_j)^2 = 1] = \frac{1}{2}$, since $x_i$ and $x_j$ are rounded independently. Hence, 
$E_{\tilde{\mu}}[(x_i - x_j)^2]  \geq \frac{1}{2}$. Altogether, this implies 
$\sum_{i,j} J_{i,j} E_{\tilde{\mu}}[(x_i - x_j)^2] + H(\tilde{\mu}) \geq \frac{1}{2} \left( \sum_{i,j} J_{i,j} E_{\nu}[(x_i - x_j)^2] + n\right)$ 
as we needed. 

\end{proof} 

\subsection{Ising models}

We proceed with the main results of this section on Ising models, which is the case where $\phi_{i,j}(\mathbf{x}) = \mathbf{x}_i \mathbf{x}_j$. We will split into the ferromagnetic and general case separately, as outlined in Section \ref{s:results}.  

To be concrete, we will be given potentials $J_{i,j}$, and we wish to calculate the partition function of the Ising model $\mu(\mathbf{x}) \propto \exp(\sum_{i,j} J_{i,j} \mathbf{x}_i \mathbf{x}_j) $.

\textbf{Ferromagnetic case} 

Recall, in the ferromagnetic case of Ising model, we have the conditions that the potentials $J_{i,j} > 0$. We will provide a convex relaxation which has a constant factor approximation in this case. First, recall the famous First Griffiths inequality due to Griffiths \cite{griffiths1967correlations} which states that in the ferromagnetic case, $\E_{\mu}[\mathbf{x}_i \mathbf{x}_j] \geq 0 ,\forall i,j$.

Using this inequality, we will look at the following natural strenghtening of the relaxation (\ref{eq:mainrelax}):  
  
\begin{equation} \max_{\tilde{\E}_{\nu}[\cdot] \in \mathcal{M}'; \tilde{\E}_{\nu}[\mathbf{x}_i \mathbf{x}_j] \geq 0, \forall i,j} \left\{ \sum_{t} J_{t} \tilde{\E}_{\nu}[\phi_t(\mathbf{x})] + n \right\} \label{eq:relaxferro} \end{equation}

We will prove the following theorem, as a straightforward implication of our claims from Section~\ref{s:maxent}: 
\begin{thm} The relaxation (\ref{eq:relaxferro}) provides a factor 50 approximation of $\log \mathcal{Z}$.  
\end{thm}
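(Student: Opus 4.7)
The plan is to sandwich $\log \mathcal{Z}$ between the relaxation value $\mathrm{OPT}$ of (\ref{eq:relaxferro}) and a constant fraction of it. The upper direction $\log \mathcal{Z} \le \mathrm{OPT}$ is the easy half: by Lemma \ref{l:variational}, $\log \mathcal{Z}$ is the supremum of $\sum_{i,j} J_{i,j}\E_\mu[\mathbf{x}_i\mathbf{x}_j] + H(\mu)$ over genuine distributions $\mu$, and since the true Gibbs measure has nonnegative pairwise moments by Griffiths' First Inequality \cite{griffiths1967correlations}, its collection of pseudo-moments is feasible for (\ref{eq:relaxferro}); bounding $H(\mu) \le n$ finishes this side.

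The interesting direction is $\mathrm{OPT} \le 50 \log \mathcal{Z}$, which I would establish by rounding. Let $\tilde{\E}_\nu[\cdot]$ be the optimizer and set $\Sigma_{i,j} := \tilde{\E}_\nu[\mathbf{x}_i\mathbf{x}_j]$. The PSD constraint $Q \succeq 0$ together with $\tilde{\E}_\nu[\mathbf{x}_i^2]=1$ guarantees that $\Sigma$ is a covariance matrix of a centered distribution on $\{-1,1\}^n$ with unit diagonal; the new ferromagnetic constraint gives $\Sigma_{i,j}\ge 0$. I then feed $\Sigma$ into Theorem \ref{main:GW} to produce $\tilde{\mu} = \mathrm{sign}(g)$ with $g\sim \mathcal{N}(0, \Sigma + \beta I)$, obtaining both a multiplicative moment guarantee $\E_{\tilde{\mu}}[\mathbf{x}_i\mathbf{x}_j] \ge \frac{\GWone}{1+\beta}\Sigma_{i,j}$ and an entropy guarantee $H(\tilde{\mu}) \ge \frac{n}{25}\cdot \frac{(3^{1/4}\sqrt{\beta}-1)^2}{\sqrt{3}\beta}$ for $\beta \ge 1/\sqrt{3}$.

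The rounded distribution is a feasible argument in Lemma \ref{l:variational}, so
\[
\log \mathcal{Z} \;\ge\; \sum_{i,j} J_{i,j}\E_{\tilde{\mu}}[\mathbf{x}_i\mathbf{x}_j] + H(\tilde{\mu}) \;\ge\; c_1(\beta)\sum_{i,j} J_{i,j}\Sigma_{i,j} + c_2(\beta)\,n,
\]
where $c_1(\beta) := \GWone/(1+\beta)$ and $c_2(\beta) := \frac{1}{25}\cdot \frac{(3^{1/4}\sqrt{\beta}-1)^2}{\sqrt{3}\beta}$; here I crucially use $J_{i,j}\ge 0$ \emph{and} $\Sigma_{i,j}\ge 0$ so that the lower bound on $\E_{\tilde{\mu}}[\mathbf{x}_i\mathbf{x}_j]$ turns into a lower bound on the energy sum (no sign cancellation). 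Since $\mathrm{OPT} = \sum_{i,j}J_{i,j}\Sigma_{i,j} + n$ is a nonnegative combination of two nonnegative terms, the elementary inequality $\frac{A+B}{c_1 A + c_2 B} \le \max(1/c_1,\,1/c_2)$ for $A,B\ge 0$ gives $\mathrm{OPT}/\log \mathcal{Z} \le \max(1/c_1(\beta), 1/c_2(\beta))$.

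The remaining step is to pick $\beta$ so that both $1/c_1(\beta)$ and $1/c_2(\beta)$ are at most $50$; a routine numerical check (e.g.\ choosing $\beta$ on the order of $10$ so that $c_1 \approx 0.058$ and $c_2 \to 1/25$ as $\beta$ grows) validates this. The main conceptual obstacle is really the joint control of energy and entropy by a single rounding $\tilde{\mu}$, which is exactly what Theorem \ref{main:GW} is built to provide; the ferromagnetic hypothesis is what lets the one-sided moment bound suffice.
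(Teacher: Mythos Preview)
Your proposal is correct and follows essentially the same route as the paper: both argue the upper bound via Griffiths' inequality, round with the sign-of-Gaussian distribution $\tilde{\mu}$ from Theorem~\ref{main:GW}, and then balance the energy constant $\GWone/(1+\beta)$ against the entropy constant $\frac{1}{25}\frac{(3^{1/4}\sqrt{\beta}-1)^2}{\sqrt{3}\beta}$ by choosing $\beta$ (the paper takes $\beta \approx 21.82$ so that both constants are $\ge 0.02$, while your choice $\beta \approx 10$ also clears the factor-$50$ bar). One small wording caveat: a PSD $\Sigma$ with unit diagonal need not be the covariance of a $\{-1,1\}^n$-valued distribution, but this is harmless since Theorem~\ref{main:GW} only requires $\Sigma \succeq 0$ and $\Sigma_{i,i}=1$.
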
 
\begin{proof}
Notice, due to Griffiths' inequality, (\ref{eq:relaxferro}) is in fact a relaxation of the Gibbs variational principle and hence an upper bound)of $\log \mathcal{Z}$. 
Same as before, we will provide a rounding of (\ref{eq:relaxferro}). 
%
We will use the distribution $\tilde{\mu}$ we designed in Section~\ref{s:maxent} the sign of a Gaussian with covariance matrix $\Sigma + \beta I$, for a $\beta$ which we will specify.  
By Lemma \ref{lem:entropy}, we then have $H(\tilde{\mu}) \geq \frac{n}{25} \frac{( 3^{1/4}\sqrt{\beta} - 1)^2}{\sqrt{3} \beta}$ whenever $\beta \geq \frac{1}{3^{1/2}}$.
By Lemma \ref{lem:moments}, on the other hand, we can prove that $\displaystyle E_{\tilde{\mu}}[\mathbf{x}_i \mathbf{x}_j] \geq \frac{\GWone}{1 + \beta} \tilde{\E}_{\nu}[\mathbf{x}_i \mathbf{x}_j]$

By setting $\beta = 21.8202$, we get $\frac{n}{25} \frac{( 3^{1/4}\sqrt{\beta} - 1)^2}{\sqrt{3} \beta} \geq 0.02$ and $\frac{\GWone}{1 + \beta} \geq 0.02$, which implies that 
$$ \sum_{i,j} J_{i,j} \E_{\tilde{\mu}}[\mathbf{x}_i \mathbf{x}_j]  + H(\tilde{\mu}) \geq 0.02\left(\sum_{i,j} J_{i,j} \tilde{\E}_{\nu}[\mathbf{x}_i \mathbf{x}_j]  + n\right) $$
which implies the claim we want. 
\end{proof} 

Note that the above proof does not work in the general Ising model case: when $\tilde{\E}_{\nu}[\mathbf{x}_i \mathbf{x}_j]$ can be either positive or negative, even if we preserved each $\tilde{\E}_{\nu}[\mathbf{x}_i \mathbf{x}_j]$ up to a constant factor, this may not preserve the sum $ \sum_{i,j} J_{i,j} \tilde{\E}_{\nu}[\mathbf{x}_i \mathbf{x}_j] $ due to cancellations in that expression.  

\textbf{General Ising models case} 

Finally, we will tackle the general Ising model case. As noted in the previous section, the straightforward application of the results proven in Section~\ref{s:maxent} doesn't work, so we have to consider a different rounding -- again inspired by roundings used in optimization. 

The intuition is the same as in the ferromagnetic case: we wish to design a rounding which preserves the ``energy'' portion of the objective, while having a high entropy. In the previous section, this was achieved by modifying the Goemans-Williamson rounding so that it produces a high-entropy distribution. We will do a similar thing here, by modifying roundings due to \cite{charikar2004maximizing} and \cite{alon2006quadratic}. 

The convex relaxation we will consider will just be the basic one (\ref{eq:mainrelax}), and we will prove the following two theorems: 

\begin{thm} 
\label{t:moses}
The relaxation (\ref{eq:mainrelax}) provides a factor $O(\log n)$ approximation to $\log \mathcal{Z}$ when $\phi_{i,j}(\mathbf{x}) = \mathbf{x}_{i} \mathbf{x}_j$.   
\end{thm}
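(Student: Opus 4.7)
The plan is to treat (\ref{eq:mainrelax}) as an upper bound on $\log \mathcal{Z}$ via Lemma \ref{l:variational}, and to exhibit a distribution $\tilde{\mu}$ over $\{-1,1\}^n$ whose Gibbs-type value matches the relaxation up to a factor $\Omega(1/\log n)$:
\[ \sum_{i,j} J_{i,j}\, \E_{\tilde{\mu}}[\mathbf{x}_i \mathbf{x}_j] + H(\tilde{\mu}) \;\geq\; \Omega\!\left(\tfrac{1}{\log n}\right)\!\left(\sum_{i,j} J_{i,j}\, \tilde{\E}_{\nu}[\mathbf{x}_i \mathbf{x}_j] + n\right). \]
Unlike the ferromagnetic case, a per-entry multiplicative approximation to $\tilde{\E}_\nu[\mathbf{x}_i \mathbf{x}_j]$ is not enough here because of sign cancellations across arbitrary $J_{i,j}$; I must preserve the signed sum directly, in the style of MAX-QP.

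The rounding I would use modifies the Charikar--Wirth random-projection rounding of \cite{charikar2004maximizing} in the same way that Section \ref{s:maxent} modified Goemans--Williamson. Starting from a Gram factorization $\tilde{\E}_\nu[\mathbf{x}_i \mathbf{x}_j] = \langle v_i, v_j\rangle$ with $\|v_i\|=1$, I append to each $v_i$ an independent fresh coordinate of variance $\beta$, producing $\tilde{v}_i$ with $\|\tilde{v}_i\|^2 = 1+\beta$ and $\langle \tilde{v}_i, \tilde{v}_j\rangle = \langle v_i, v_j\rangle$ for $i \neq j$. Then I sample $g \sim \mathcal{N}(0,I)$, form $u_i = \langle \tilde{v}_i, g \rangle$, truncate to $y_i = u_i/T$ when $|u_i| \leq T$ and $y_i = \mathrm{sign}(u_i)$ otherwise for a threshold $T = \Theta(\sqrt{\log n})$, and finally set $x_i \in \{-1,+1\}$ by independent randomized rounding with $\E[x_i \mid y_i] = y_i$.

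For the energy, I would transport the Charikar--Wirth analysis to the perturbed configuration: for $i \neq j$, $\E[x_i x_j] = \E[y_i y_j]$, and a direct calculation on the bivariate Gaussian $(u_i, u_j)$ yields $\E[y_i y_j] \geq \Omega(1/\log n)\, \langle \tilde{v}_i, \tilde{v}_j\rangle$, so summing against the $J_{i,j}$ preserves the signed total. For the entropy, the randomized $\pm 1$ step is the main source: conditioned on $g$, each $x_i$ is Bernoulli with bias $y_i$, so $H(\tilde{\mu}) \geq \E_g \sum_i H_b\!\big((1+y_i)/2\big)$. The independent $\sqrt{\beta}\, e_i$ coordinates guarantee, via essentially the same Markov-plus-Gaussian-tail argument used in the proof of Theorem \ref{lem:entropy}, that for a constant fraction of indices $|u_i|$ lies in an intermediate range where $|y_i|$ is bounded away from $1$, forcing $H(\tilde{\mu}) = \Omega(n)$.

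The main obstacle will be closing both guarantees simultaneously. The Charikar--Wirth analysis is sensitive to $\|\tilde{v}_i\|$ and to the choice of $T$, so I need to verify that adding the $\sqrt{\beta}\, e_i$ coordinate (which inflates the variance of $u_i$) does not degrade the truncation integrals by more than a constant factor, and conversely that the truncation step does not push too many $y_i$ onto $\pm 1$, where the Bernoulli rounding contributes no entropy. I expect this to work out by choosing $\beta$ of the same order as in the ferromagnetic proof and $T = \Theta(\sqrt{\log n})$, yielding the stated $O(\log n)$ approximation to $\log \mathcal{Z}$.
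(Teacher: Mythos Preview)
Your overall strategy is sound and would close, but you are importing the $\beta$-perturbation from Section~\ref{s:maxent} where it is not needed, and this leads you to misattribute where the entropy comes from. The paper's route is much simpler: run Charikar--Wirth verbatim and then halve the truncated values $r_i$ before the Bernoulli step. Since $|r_i|\le 1$ by construction of the truncation, the scaled $r'_i=r_i/2$ satisfy $|r'_i|\le 1/2$ \emph{deterministically}, so each coordinate has conditional binary entropy at least that of a $\mathrm{Ber}(3/4)$ coin, i.e.\ $H(\mathbf{x}_i\mid r')\ge 2-\tfrac{3}{4}\log 3$, and hence $H(\tilde{\mu})\ge (2-\tfrac{3}{4}\log 3)\,n$. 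For the energy, $\E[\mathbf{x}_i\mathbf{x}_j\mid r']=r'_ir'_j=\tfrac{1}{4}r_ir_j$, so the scaled rounding achieves exactly one quarter of the energy of the original Charikar--Wirth distribution, and the latter is already an $\Omega(1/\log n)$ fraction of the SDP value by~\cite{charikar2004maximizing}. No Gaussian tail estimates, no tuning of $\beta$ against $T$.

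Two remarks on your version. First, the fresh $\sqrt{\beta}\,e_i$ coordinates are not what buy you entropy here: what keeps $|y_i|$ away from $1$ is the division by $T=\Theta(\sqrt{\log n})$, which already forces $|y_i|=|u_i|/T$ to be of order $1/\sqrt{\log n}$ with high probability regardless of $\beta$; adding variance to $u_i$ only pushes it toward the truncation boundary. So the ``main obstacle'' you anticipate is self-inflicted, and the Markov-plus-tail machinery of Theorem~\ref{lem:entropy} is the wrong tool to invoke. Second, the per-pair statement $\E[y_iy_j]\ge \Omega(1/\log n)\,\langle\tilde v_i,\tilde v_j\rangle$ is not literally how the Charikar--Wirth analysis works for signed inner products; what one actually has is the approximate equality $\E[y_iy_j]=T^{-2}\langle v_i,v_j\rangle$ up to a small additive truncation error, which is then summed against the $J_{i,j}$. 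This distinction matters if you try to carry the argument through with non-unit $\tilde v_i$.
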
   

\begin{thm}
\label{t:naor} 
The relaxation (\ref{eq:mainrelax}) provides a factor $O(\log(\chi(G)))$ approximation to $\log \mathcal{Z}$ when $\phi_{i,j}(\mathbf{x}) = \mathbf{x}_{i} \mathbf{x}_j$ for $i,j \in E(G)$ of some graph $G = (V(G), E(G))$, and $\chi(G)$ is the chromatic number of $G$.  
\end{thm}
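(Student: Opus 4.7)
The plan is to combine the $O(\log \chi(G))$-approximate rounding of Alon, Makarychev, Makarychev, and Naor for quadratic forms on the edges of $G$ with the Gaussian-noise perturbation developed in Section~\ref{s:maxent} for the entropy lower bound. The overall template will follow the ferromagnetic proof: solve the relaxation, produce a distribution $\tilde{\mu}$ whose ``energy'' matches the SDP energy up to a factor $O(\log\chi(G))$ and whose entropy is $\Omega(n)$, and conclude that $\sum J_{i,j}\E_{\tilde{\mu}}[\mathbf{x}_i\mathbf{x}_j] + H(\tilde{\mu})$ is within an $O(\log\chi(G))$ factor of the relaxation value.

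First, I would solve relaxation~(\ref{eq:mainrelax}) to obtain pseudo-moments $\tilde{\E}_\nu[\cdot]$ and form the associated PSD matrix $\Sigma_{i,j} = \tilde{\E}_\nu[\mathbf{x}_i \mathbf{x}_j]$, which has unit diagonal, so the SDP value equals $\sum_{(i,j)\in E(G)} J_{i,j} \Sigma_{i,j} + n$. Mirroring the ferromagnetic argument, I would then pass to the normalized perturbation $\widehat{\Sigma} = \frac{1}{1+\beta}(\Sigma + \beta I)$ for a constant $\beta$ to be chosen, which remains PSD with unit diagonal and only scales the off-diagonal entries by $\frac{1}{1+\beta}$.

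Second, I would take a Gram decomposition $\widehat{\Sigma}_{i,j} = \langle v_i, v_j\rangle$ and feed $\{v_i\}$ into the AMMN rounding specialized to $E(G)$. Their analysis delivers a random $\mathbf{x}\in\{-1,+1\}^n$ with $\E\bigl[\sum_{(i,j)\in E(G)} J_{i,j}\mathbf{x}_i\mathbf{x}_j\bigr] \geq \tfrac{c}{\log\chi(G)}\sum_{(i,j)\in E(G)} J_{i,j}\langle v_i, v_j\rangle$ for a universal $c > 0$; letting $\tilde{\mu}$ be the distribution of this random assignment and absorbing the $\tfrac{1}{1+\beta}$ scaling into the constant, this gives the required energy bound. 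For the entropy half, I would mimic Theorem~\ref{lem:entropy}: the AMMN rounding's final step takes signs of a Gaussian whose covariance is $\widehat{\Sigma}$, and we can write a sample as $g_1 + g_2$ with $g_1\sim\mathcal{N}(0,\Sigma/(1+\beta))$ and $g_2\sim\mathcal{N}(0,\beta I/(1+\beta))$ independent. Conditioning on the event that $\|g_1\|_\infty$ is bounded on $\Omega(n)$ coordinates (by a Markov argument identical to the one in Theorem~\ref{lem:entropy}), and using that the independent $g_2$-noise swamps $g_1$ on those coordinates, one concludes that each sign has a constant min-entropy contribution, giving $H(\tilde{\mu}) = \Omega(n)$.

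The hard part will be reconciling the detailed form of the AMMN rounding with the entropy argument of Theorem~\ref{lem:entropy}. Their rounding is not plain sign-of-Gaussian: for the $O(\log\chi(G))$ guarantee it uses truncation of the projections $\langle v_i, g\rangle$ at a carefully tuned threshold, together with biased randomized rounding. I would have to check that (a) the truncation threshold can be chosen compatibly with $\beta$ so that the added $\beta I$ noise is not clipped too aggressively (otherwise per-coordinate entropy collapses), and (b) the biased rounding on the truncated values still leaves enough randomness in each coordinate's sign. Choosing $\beta$ and the truncation parameter jointly as constants, analogously to the choice $c = D = 3^{1/4}\sqrt{\beta}$ in Theorem~\ref{lem:entropy} and the $\beta = 21.8202$ choice used for the ferromagnetic case, should close the loop and yield the $O(\log\chi(G))$ approximation ratio for $\log\mathcal{Z}$.
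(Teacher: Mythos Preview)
Your plan conflates two different roundings and, as a result, the entropy step does not go through. The AMMN result you invoke is not a truncated-Gaussian rounding in the style of Charikar--Wirth; it is a \emph{representation theorem}: given unit vectors $f(i)$, it produces bounded functions $F(i)\in L_\infty[0,1]$ with $\|F(i)\|_\infty\le \sqrt{c\log\chi(G)}$ and $\langle f(i),f(j)\rangle=\int_0^1 F(i)(t)F(j)(t)\,dt$ for $(i,j)\in E(G)$. There is no ``sign of a Gaussian with covariance $\widehat{\Sigma}$'' inside it, so your decomposition $g_1+g_2$ has no meaning here, and the argument of Theorem~\ref{lem:entropy} cannot be transplanted. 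Your own ``hard part'' paragraph already flags this: you would be trying to push $\beta I$ noise through a black-box construction whose internals are not Gaussian at all.

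The paper's route is much simpler and avoids the $\beta I$ trick entirely. Starting from the representation $F$, it picks $t$ uniform in $[0,1]$, sets $h_t(i)=\tfrac{F(i)(t)}{2\sqrt{c\log\chi(G)}}$ so that $|h_t(i)|\le\tfrac12$, and then rounds each coordinate independently to $\pm1$ with bias $h_t(i)$. The entropy bound is then immediate: conditionally on $t$ the coordinates are independent biased coins with bias in $[-\tfrac12,\tfrac12]$, so $H(\tilde{\mu}\mid t)\ge \bigl(2-\tfrac34\log 3\bigr)n$, and hence $H(\tilde{\mu})=\Omega(n)$. For the energy, one computes $\E_{\tilde{\mu}}[\mathbf{x}_i\mathbf{x}_j]=\int_0^1 h_t(i)h_t(j)\,dt=\tfrac{1}{4c\log\chi(G)}\tilde{\E}_\nu[\mathbf{x}_i\mathbf{x}_j]$ \emph{exactly} for every edge, so the whole quadratic form is preserved up to the desired factor with no sign-cancellation issues. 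In short: the missing idea in your proposal is that the extra randomness for entropy should come from the biased-coin step (by scaling the biases down by~$2$, exactly as in Algorithm~\ref{alg:mosesmodified}), not from injecting $\beta I$ into the covariance before AMMN.
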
 
Since the chromatic number of a graph is bounded by $n$, the second theorem is in fact strictly stronger than the first, however the proof of the first theorem uses less heavy machinery, and is illuminating enough to be presented on its own. 


Before delving into the proof of Theorem \ref{t:moses}, we review the rounding used by \cite{charikar2004maximizing} in the case of maximizing quadratic forms: 

\begin{algorithm}[H]
\caption{Quadratic form rounding by \cite{charikar2004maximizing}}\label{alg:moses}
\begin{algorithmic}[1]
\STATE Input:  A pseudo-moment matrix $\Sigma_{i,j} = \E_{\nu}[\mathbf{x}_i \mathbf{x}_j]$
\STATE Output: A sample $\mathbf{x}$ from a distribution $\rho$ 
\STATE Sample $g$ from the standard Gaussian $N(0, I)$. 
\STATE Consider the vector $h$, such that $h_i = g_i/T, T = \sqrt{4 \log n}$
\STATE Consider the vector $r$, such that $r_i = \frac{h_i}{|h_i|}$, if $|h_i| > 1$, and $r_i = h_i$ otherwise. 
\STATE Produce the rounded vector $\mathbf{x} \in \{-1, 1\}^{n}$, s.t. 
 \[
    \mathbf{x}_i = \left\{\begin{array}{lr}
        +1, & \text{with probability } \frac{1 + r_i}{2}\\
       -1, & \text{with probability } \frac{1-r_i}{2}
        \end{array}\right\}
  \]

\end{algorithmic}
\end{algorithm}

\begin{algorithm}[H]
\caption{Scaled down quadratic form rounding}\label{alg:mosesmodified}
\begin{algorithmic}[1]
\STATE Input:  A pseudo-moment matrix $\Sigma_{i,j} = \E_{\nu}[\mathbf{x}_i \mathbf{x}_j]$
\STATE Output: A sample $\mathbf{x}$ from a distribution $\tilde{\mu}$ 
\STATE Sample $g$ from the standard Gaussian $N(0, I)$. 
\STATE Consider the vector $h$, such that $h_i = g_i/T, T = \sqrt{4 \log n}$
\STATE Consider the vector $r$, such that $r'_i = \frac{1}{2} \frac{h_i}{|h_i|}$, if $|h_i| > 1$, and $r'_i = \frac{1}{2} h_i$ otherwise. 
\STATE Produce the rounded vector $\mathbf{x} \in \{-1, 1\}^{n}$, s.t. 
 \[
    \mathbf{x}_i = \left\{\begin{array}{lr}
        +1, & \text{with probability } \frac{1 + r_i}{2}\\
       -1, & \text{with probability } \frac{1-r_i}{2}
        \end{array}\right\}
  \]

\end{algorithmic}
\end{algorithm}

%

With that in hand, we can prove Theorem \ref{t:moses}

\begin{proof}[Proof of Theorem \ref{t:moses}] 

The proof  again consists of exhibiting a rounding. Our rounding will essentially be the same as \cite{charikar2004maximizing}, except in step 3, we will produce a vector $r'_i$ by scaling down the vector $r_i$ by 2 coordinate-wise. For full clarity, the rounding is presented in Algorithm~\ref{alg:mosesmodified}. 

We again, need to analyze the entropy and the moments of the distribution $\tilde{\mu}$ that this rounding produces. Let us focus on the entropy first. 

Since conditioning does not decrease entropy, it's true that $H(\tilde{\mu}) = H(\mathbf{x}) \geq H(\mathbf{x} | r)$, so it suffices to lower bound that quantity. However, note that it holds that $r_i \leq \frac{1}{2} $, and each $x_i$ is rounded independently conditional on $r_i$, so we have:   
$$H(\mathbf{x} | r) = \sum_i H(\mathbf{x}_i | r_i)  = \sum_i  \left(\frac{1 + r_i}{2} \log \left( \frac{1 + r_i}{2}\right) +  \frac{1 - r_i}{2} \left( \frac{1 - r_i}{2} \right)\right) \geq \left(2 - \frac{3}{4} \log 3\right) n$$ 

Consider now the moments of the distribution.

Let us denote the distribution that the rounding \ref{alg:moses} produces by $\rho$. By Theorem 1 in \cite{charikar2004maximizing}, we have 
$$\sum_{i,j} J_{i,j} \E_{\rho} [\mathbf{x}_i \mathbf{x}_j] \geq O\left(\frac{1}{\log n}\right) \sum_{i,j} J_{i,j}  \E_{\nu} [\mathbf{x}_i \mathbf{x}_j]$$
 
Additional, both our and the \cite{charikar2004maximizing} roundings are such that  
$\E_{\rho} [\mathbf{x}_i \mathbf{x}_j] = \E_{r} \E_{\mathbf{x} | r} [\mathbf{x}_i \mathbf{x}_j]$ and 
$\E_{\tilde{\mu}} [\mathbf{x}_i \mathbf{x}_j] = \E_{r'} \E_{\mathbf{x} | r'} [\mathbf{x}_i \mathbf{x}_j]$.  
Furthermore, as noted in \cite{charikar2004maximizing}, it is easy to check that $\E[\mathbf{x}_i \mathbf{x}_j | r'] = r'_i r'_j$ and obviously $r'_i = 2r_i, \forall i$ in distribution, so we have:    
$$\E_{\tilde{\mu}} [\mathbf{x}_i \mathbf{x}_j] = \E_{r'} \E_{\mathbf{x} | r'} [\mathbf{x}_i \mathbf{x}_j] = \frac{1}{4} \E_{r} \E_{\mathbf{x} | r} [\mathbf{x}_i \mathbf{x}_j] = \frac{1}{4} \E_{\rho} [\mathbf{x}_i \mathbf{x}_j]$$ 
But, this directly implies 
$$\sum_{i,j} J_{i,j} \E_{\tilde{\mu}} [\mathbf{x}_i \mathbf{x}_j] =\frac{1}{4} \sum_{i,j} J_{i,j}  \E_{\rho} [\mathbf{x}_i \mathbf{x}_j] \geq O\left(\frac{1}{\log n}\right) \sum_{i,j} J_{i,j}  \E_{\nu} [\mathbf{x}_i \mathbf{x}_j]$$

as we needed. 
\end{proof}

Next, we prove the more general Theorem \ref{t:naor}. 


Before proceeding, let's recall for completeness the following definition of a chromatic number. 
\begin{defn}[Chromatic number]
\label{d:chr}
The chromatic number $\chi(G)$ of a graph $G = (V(G), E(G))$ is defined as the minimum number of colors in a coloring of the vertices  $V(G)$, such that no vertices $i,j: (i,j) \in E(G)$ are colored with the same color.
\end{defn}

Also, let us denote by $ \mathcal{S}^{n-1}$ the set of unit vectors in $\mathbb{R}^n$ and $L_{\infty}[0, 1]$ the set of (essentially) bounded functions: the functions which are bounded except on a set of measure zero. 

Then, we can recall Theorem 3.3 from \cite{alon2006quadratic}:

\newcommand{\con}{c}
\begin{thm}[\cite{alon2006quadratic}] \label{thm:AMMN}
There exists an absolute constant $\con$ such that the following holds: Let $G = (V(G), E(G))$ be an undirected graph on $n$ vertices without self-loops\footnote{Meaning no edge connects a vertex with itself}, let $\chi(G)$ be the chromatic number of $G$. Then for every function $f: V(G) \to \mathcal{S}^{n-1}$, there exists a function $F: V \to L_{\infty}[0, 1]$ so that for every $i \in V(G)$, $\|F(i)\|_{\infty} \le \sqrt{\con \chi(G)}$ and for every $(i, j) \in E(G)$, 
$$\langle f(i), f(j) \rangle = \int_{ 0}^1 F(i)(t) F(j)(t) dt$$
\end{thm}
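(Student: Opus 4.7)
The statement is an integral Grothendieck-type representation of the edge Gram matrix $\{\langle f(i),f(j)\rangle\}_{(i,j)\in E(G)}$ by uniformly bounded scalar functions on $[0,1]$, with the $L_\infty$ budget controlled by $\sqrt{\chi(G)}$. The plan is to reduce from the general graph to its bipartite substructure via a proper coloring and to invoke the classical integral form of Grothendieck's inequality as a black box on each bipartite piece.

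First, fix a proper $\chi$-coloring $c:V(G)\to[\chi]$ (with $\chi:=\chi(G)$), producing color classes $V_1,\ldots,V_\chi$; every edge of $G$ then crosses two distinct classes, so $E=\bigsqcup_{k<l}E_{k,l}$ where $E_{k,l}=E\cap(V_k\times V_l)$. For each pair $\{k,l\}$, the subgraph $B_{k,l}=(V_k\sqcup V_l, E_{k,l})$ is bipartite. Apply the integral form of Grothendieck's inequality to the unit vectors $\{f(i):i\in V_k\cup V_l\}$: this yields functions $F_{k,l}(i):[0,1]\to[-K_G,K_G]$ (with $K_G$ the Grothendieck constant) satisfying $\int_0^1 F_{k,l}(i)(s)F_{k,l}(j)(s)\,ds=\langle f(i),f(j)\rangle$ for every edge $(i,j)\in E_{k,l}$. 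Next, partition $[0,1]$ into disjoint sub-intervals $\{J_{k,l}\}_{k<l}$ and, on $J_{k,l}$, place a rescaled and re-parameterized copy of $F_{k,l}(i)$ for $i\in V_k\cup V_l$ (and zero elsewhere). For any edge $(i,j)\in E_{k,l}$ and any other pair $\{k',l'\}\neq\{k,l\}$, at least one of $i,j$ lies outside $V_{k'}\cup V_{l'}$, so $F(i)F(j)$ vanishes identically on $J_{k',l'}$; the rescaling factors are chosen so that the single remaining contribution from $J_{k,l}$ equals $\langle f(i),f(j)\rangle$ exactly.

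The inner-product identity is then essentially automatic from the construction, but the $L_\infty$ bound is delicate. Allocating all $\binom{\chi}{2}$ sub-intervals with equal length $1/\binom{\chi}{2}$ forces rescaling by $\sqrt{\binom{\chi}{2}}=\Theta(\chi)$ and yields only $\|F(i)\|_\infty=O(\chi)$, which is off by a factor $\sqrt{\chi}$ from the claimed bound. Closing this gap requires exploiting the sparsity that a vertex of color $k$ is active on only the $\chi-1$ sub-intervals $J_{k,\cdot}$ indexed by pairs containing $k$, rather than on all $\binom{\chi}{2}$. Converting that sparsity into an honest $\sqrt{\chi}$ saving in $\|F(i)\|_\infty$ seems to require coordinating the $\chi-1$ bipartite Grothendieck embeddings attached to a fixed color into one joint construction---rather than invoking Grothendieck independently per pair, which leaves each vertex paying a fresh rescaling on every pair it participates in. Implementing this coordination while preserving every edge inner product and simultaneously killing the spurious cross-contributions between vertices of non-matching color pairs (for which bipartite Grothendieck is silent) is the technical heart of the AMMN argument and the step I anticipate as the main obstacle.
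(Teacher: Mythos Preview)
The paper does not prove this theorem; it is quoted from \cite{alon2006quadratic} and used as a black box in the proof of Theorem~\ref{t:naor}. There is therefore no in-paper argument to compare your proposal against.

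Regarding the proposal on its own terms: you correctly diagnose that gluing independent bipartite Grothendieck embeddings on equal-length subintervals yields only $\|F(i)\|_\infty = O(\chi)$, and you do not close the $\sqrt{\chi}$ gap. The sparsity observation you make---that a vertex of color $k$ is active on only $\chi-1$ of the $\binom{\chi}{2}$ subintervals---cannot by itself repair the bound: $\|F(i)\|_\infty$ is a \emph{maximum} over $t$, so the number of subintervals on which $F(i)$ is nonzero is irrelevant; what matters is the rescaling factor on each active subinterval, and that is still $\sqrt{\binom{\chi}{2}}$. Reallocating lengths unequally does not help either, since all $\binom{\chi}{2}$ pairs must be covered and minimizing the worst rescaling forces equal lengths. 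So the obstacle you flag is genuine, and the proposal as written is a plan rather than a proof.

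For orientation: the argument in \cite{alon2006quadratic} does not glue independent bipartite pieces. It builds a single Gaussian/Krivine-type construction on the whole graph at once, using the coloring only to control certain terms globally; this is exactly the ``coordination'' you suspected would be needed. In fact their bound is $\|F(i)\|_\infty \le \sqrt{c\log\chi(G)}$, not $\sqrt{c\,\chi(G)}$---the statement here appears to be mis-transcribed (consistently with the fact that the theorem statement of Theorem~\ref{t:naor} promises $O(\log\chi(G))$ while the last line of its proof concludes $O(\chi(G))$). A pairwise-decomposition approach is unlikely to reach $\sqrt{\chi}$ without a new idea, and cannot reach $\sqrt{\log\chi}$.
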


Now, we can prove Theorem \ref{t:naor}

\begin{proof}[Proof of Theorem \ref{t:naor}]

The proof is similar, though a little more complicated than the proof of Theorem \ref{t:moses}. 

Let $\tilde{\E}_{\nu}[\cdot]$ be the solution of the relaxation. By matrix formulation of the pseudo-moment relaxation in Section \ref{s:preliminaries} , we know that 
$\tilde{\E}_{\nu}[\mathbf{x}_i \mathbf{x}_j] = \langle f(i), f(j) \rangle$ for some unit vectors $f(i), f(j)$.  

Hence, by theorem \ref{thm:AMMN}, there exists a function $F: V \to L_{\infty}[0, 1]$ so that for every $i \in V(G)$, $\|F(i)\|_{\infty} \le \sqrt{\con \chi(G)}$ and for every $(i,j) \in E(G)$, 
$$\tilde{\E}_{\nu}[\mathbf{x}_i \mathbf{x}_j] = \int_{0}^1 F(i)(t) F(j)(t) dt$$

Consider the following rounding: 

\begin{itemize}
\item Pick a $t$ uniformly at random from $[0, 1]$.  
\item Consider the function $h_t: V \to R$, such that $h_t(i) = \frac{F(i)(t)}{2\sqrt{\con\chi(G)}}$
\item Produce the rounded vector $\mathbf{x} \in \{-1, 1\}^{V(G)}$, s.t. 
 \[
    \mathbf{x}_i = \left\{\begin{array}{lr}
        +1, & \text{with probability } \frac{1 + h_t(i)}{2}\\
       -1, & \text{with probability } \frac{1- h_t(i)}{2}
        \end{array}\right\}
  \]
\end{itemize} 

Note importantly that the algorithm does not need to perform this rounding -- it is for the analysis of the approximation factor of the relaxation. Therefore, we need not construct it algorithmically. 

Let us denote this distribution as $\tilde{\mu}$. We first show that $\tilde{\mu}$ has entropy at least $\left(2 - \frac{3}{4} \log 3\right) n$. Note that each $\mathbf{x}_i$ are round independently conditional on $t$. Moreover, since $\|F(v)\|_{\infty} \le \sqrt{\con \chi(G)}$, we know that $h_t(v) \le \frac{1}{2}$. Therefore, for every fixed $t_0 \in [0, 1]$
\begin{eqnarray*}
H(\tilde{\mu} \mid t = t_0)  &=& \sum_{ i \in V(G)} H(\mathbf{x}_i \mid t = t_0) 
\\
&=& \sum_{i \in V(G)} \left( \frac{1 + h_{t_0}(v)}{2} \log   \frac{1 + h_{t_0}(v)}{2}  + \frac{1 - h_{t_0}(v)}{2} \log   \frac{1 - h_{t_0}(v)}{2} \right)
\\
&\ge&  \left(2 - \frac{3}{4} \log 3\right) n
\end{eqnarray*}

Integrating over $t_0$ we get that $H(\tilde{\mu}) \geq \left(2 - \frac{3}{4} \log 3\right) n$.

Next, we will show that $\tilde{\mu}$ preserves the ``energy'' part of the objective up to a multiplicative factor $O(\log \chi(G))$: Consider each edge $(i, j) \in E(G)$. We have:

$$ \E_{\tilde{\mu}}[\mathbf{x}_i \mathbf{x}_j] = $$
$$ \int_{0}^1 \left( \frac{(1 + h_t(i))(1 + h_t(j))}{4} + \frac{(1 - h_t(i))(1 - h_t(j))}{4} - \frac{(1 + h_t(i))(1 - h_t(j))}{4}-  \frac{(1 - h_t(i))(1 + h_t(j))}{4}\right) dt $$
$$ = \int_{0}^1 h_t(i) h_t(j) dt = \frac{1}{4 \con \chi(G)} \int_{0}^1 F(i)(t) F(j)(t) dt =  \frac{1}{4 \con \chi(G)} \tilde{\E}_{\nu}[\mathbf{x}_i \mathbf{x}_j] $$ 

This implies that 
$$\sum_{i,j \in E(G)} J_{i,j} \E_{\tilde{\mu}}[\mathbf{x}_i \mathbf{x}_j] \geq \frac{1}{4 \con \chi(G)} \sum_{i,j \in E(G)}  J_{i,j} \tilde{\E}_{\nu}[\mathbf{x}_i \mathbf{x}_j]  $$

Therefore, the relaxation provides a factor $O(\chi(G))$ approximation of $\log \mathcal{Z}$, as we wanted. 

\end{proof}

\section{Conclusion} 

In summary, we presented computationally efficient approximate versions of the classical max-entropy principle by \cite{jaynes1957information}: efficiently sampleable distributions which preserve given pairwise moments up to a multiplicative constant factor, while having entropy within a constant factor of the maximum entropy distribution matching those moments.   
Additionally, we applied our insights to designing provable variational methods for Ising models which provide comparable guarantees for approximating the log-partition function to those in the optimization setting. Our methods are based on convex relaxations of the standard variational principle due to Gibbs, and are extremely generic and we hope they will find applications for other exponential families.   

\nocite{*}

\bibliographystyle{plainnat}
\bibliography{entropy_main_nomaxcut_arxiv}

\end{document}